\newtheorem{theorem}{Theorem}
\newtheorem{lemma}[theorem]{Lemma}
\newtheorem{corollary}{Corollary}[theorem]
\def\p{p}
\def\q{q}
\def\Z{Z}
\def\KL{\text{KL}}
\def\KL{\mathsf{KL}}
\def\EE{\mathbb{E}}
\def\RR{\mathbb{R}}
\def\Prob{\mathbb{P}}
\title{On Exploration, Exploitation and Learning in Adaptive Importance Sampling}
\titlespacing\section{0pt}{6pt plus 2pt minus 2pt}{2pt plus 2pt minus 0pt}
\titlespacing\subsection{0pt}{6pt plus 2pt minus 2pt}{2pt plus 2pt minus 0pt}
\titlespacing\subsubsection{0pt}{6pt plus 2pt minus 2pt}{2pt plus 2pt minus 0pt}
\begin{document}

\maketitle
\begin{abstract}

We study adaptive importance sampling (AIS) as an online learning problem and argue for the importance of the trade-off between exploration and exploitation in this adaptation. Borrowing ideas from the bandits literature, we propose Daisee, a partition-based AIS algorithm. We further introduce a notion of regret for AIS and show that Daisee has
$\mathcal{O}(\sqrt{T}(\log T)^{\frac{3}{4}})$ cumulative pseudo-regret, where $T$ is the number of iterations.
We then extend Daisee to adaptively  learn a hierarchical partitioning of the sample space for more efficient sampling and confirm the performance of both algorithms empirically.

\end{abstract}

\section{Introduction}

Monte Carlo (MC) methods form the bedrock upon which significant sections of probabilistic machine learning and computational statistics rest. An important MC technique which forms the basis for many others is importance sampling (IS). Let $\pi(x) = f(x)/Z$ be a target density which can be evaluated pointwise up to an unknown normalising constant $Z$ and let $q(x)$ be a proposal distribution from which samples can be drawn and can be evaluated pointwise. 
IS works by drawing a sequence of samples from $q(x)$ and using these to 
estimate $Z$ and
provide an empirical measure $\hat{p}(\cdot)$ that we can use at later time to calculate target statistics $\EE_\pi[\phi(x)]$ for any arbitrary test function $\phi(x)$. Let $w(x_t) = f(x_t)/q(x_t)$ be the importance weight of $x_t$.  Then,
\begin{align}
Z = \int &f(x)dx = \EE_q[w(X)] \approx \frac{1}{T}\sum_{t=1}^T w(x_t), \\
\hat{p}(\cdot) &= \frac{\sum_{t=1}^T w(x_t) \delta_{x_{t}}(\cdot)}{\sum_{t=1}^T w(x_t)},
\\
\EE_\pi[\phi(x)] 
&\approx 
\int \hat{p}(x) \phi (x) dx = 
\frac{\sum_{t=1}^T w(x_t) \phi(x_t)}{\sum_{t=1}^T w(x_t)}. \label{eq:ISestimates}
\end{align}
Note that the estimate for $Z$ is unbiased, but that for the target statistics is biased but consistent, provided $\text{support}(f) \subset \text{support}(q)$ and $\EE_q[w(X)\phi(X)] < \infty$ \citep{mcbook}. 

The efficiency of IS for a general $\phi(x)$ is governed by the choice of proposal $q(x)$, with the intuition that the closer $q$ is to $\pi$ the better.
Adaptive Importance Sampling (AIS) techniques \citep{liu2008monte,cappe2004population,cornuet2012adaptive,cappe2008adaptive} attempt to improve the efficiency of IS by adapting the proposal to be closer to the target, producing a sequence of proposals $q_1,q_2,\ldots$.  The IS estimates \eqref{eq:ISestimates} still apply with $q(x_t)$ replaced by $q_t(x_t)$. 
The basic idea is that previous samples provide information about the target distribution and this information can be used to improve the proposal by altering it to better match the target~\citep{mcbook,bugallo2017adaptive}. 

Viewed in this way, AIS is, in effect, an online learning problem: that of learning the target density $\pi$ by iteratively querying it.  As opposed to the typical setup of density estimation, where each query is an iid sample from the target, here 
$q_t$ is both our current estimate of the target, as well as our tool for querying the target.  
This exposes a trade-off between exploration and exploitation.  We would like our proposal $q_t$ to be as close as possible to the target, so that our IS estimate is as good as possible (exploit). At the same time, $q_t$ directs where queries of the target are made and probability mass needs to be spread over the sample space where we have high uncertainty of the target, so that we may query and reduce our uncertainty to improve our estimate in the future (explore). 

In this paper we take the first steps towards developing AIS methods which optimally trade off exploration and exploitation by bringing to bear ideas from the rich literature on online learning, in particular that on multi-armed bandit algorithms \citep{berry1985bandit,auer2002finite,agrawal2012analysis}. We propose the \emph{aDAptive Important Sampling via Exploration and Exploitation} (Daisee), an algorithm which operates by partitioning the sample space into $K$ disjoint subsets, and adapting the proposal probability of each subset by combining an exploitative estimation term and an explorative optimism boost. 
Based on the analysis of~\citep{chatterjee2015sample}, which showed that the Kullback-Leibler (KL) divergence from the target to the proposal is the appropriate measure of the (in)efficiency of inference using IS,
we propose using the KL divergence as the measure of loss for an AIS scheme, and show that, 
under mild assumptions,
Daisee achieves a cumulative pseudo-regret of $O(\sqrt{T}(\log T)^\frac{3}{4})$ where $T$ is the  number of iterations.
This means that its per-iteration pseudo-regret asymptotes to zero.
We also show a similar result when generalising to $\alpha$-divergence \citep{cichocki2010families} based regrets, with $\alpha=1$ reducing to the KL case, while $\alpha=2$ corresponds to the variance of the importance weights $\frac{\pi(x)}{q(x)}$.
Finally, we introduce and empirically investigate HiDaisee, a hierarchical extension to Daisee in which the partitions themselves are also simultaneously learned in an online fashion, leading to a more efficient overall proposal.  


\section{Related Work}\label{sec:literature}

The general topic of adaptive MC has  received extensive attention in the literature (see e.g.~\citep{bugallo2017adaptive,mcbook} and  references therein).  There are a wide range of problems such as rare event simulation~\citep{owen2000safe,owen2017importance} for which IS is more useful. Compared with MCMC, IS produces a marginal likelihood estimate. It is also the key component of many advanced MC methods.

Of particular relevance to our work, 
\citep{friedman1991multivariate,lepage1978new} tackle the problem of multidimensional integration by recursive rectangular partitioning of the target space, where regions with higher contribution to the integral are subdivided into more subregions. Here we instead focus on the learning of the proposal densities with exploration-exploitation trade-off in mind rather than integration. Meanwhile, \citep{he2014optimal} bound the regret for multiple importance sampling where the proposal distribution is a mixture of a finite number of proposal distributions, by bounding the variance of the importance sampling estimates with the help of control variates. \cite{cappe2008adaptive} use an entropy criterion instead to learn the weights and component parameters of a mixture importance sampling density.

On the other hand, methods which address the exploration-exploitation trade off have been well-studied in online learning, most successfully under the banner of bandit algorithms \citep{berry1985bandit,bubeck2012regret,srinivas2009gaussian}, with Upper Confidence Bound (UCB) \citep{auer2002finite} and Thompson sampling \citep{agrawal2012analysis} being popular approaches. In the standard multi-armed bandit problem, one has a finite number arms and must choose an arm to pull at each iteration, returning a random reward which is distributed differently for each arm.
The aim is to maximise rewards in the long run by finding the arm with the highest average reward.  UCB methods operate by maintaining an estimate of the expected reward for each arm and picking arms according to the estimates plus optimism boosts which are larger for arms where our estimates are less certain to encourage exploration.  \cite{auer2002finite} show that UCB optimally trades-off exploration and exploitation by showing that the cumulative regret (relative to an oracle which knows the optimal arm) grows logarithmically in the number of iterations, the best growth rate achievable \citep{lai1985asymptotically}. \cite{bubeck2013bandits, sen2018contextual} extend the bandit algorithm to fat tailed distributions in absence of sub-Gaussianity.

The concept of trading off exploration versus exploitation has not been widely considered in the adaptive MC setting, though there are a number of recent works. \cite{neufeld2014adaptive} consider the problem of choosing between a number of MC estimators, where the goal is to minimise the mean squared error by
finding the estimator with lowest variance.  This is still inherently a best-arm optimisation problem, rather than a true proposal adaptation. \cite{carpentier2011finite} and \cite{lepretre2017multi} consider the problem of stratified sampling for MC integration, where each stratum is viewed as an arm and the mean in each stratum is estimated. They show that in this setting the optimal approach is to choose arms in proportion to the standard deviation of the estimate produced by a single draw from the stratum. Our setting (and resultant approach) is distinct to \citep{carpentier2011finite,lepretre2017multi} as we aim to approximate a distribution, rather than performing an integration. 

\citep{rainforth2018inference} also realised the need for adaptive MC methods to trade-off between
exploration and exploitation and they also take steps towards designing a framework for adapting the hierarchical
partition of the space.  However, their
focus is on the design of a new class of meta-inference algorithms that fall outside the AIS framework and they do not
consider any notion of regret nor provide a regret analysis.  Our focus here is more on the exploration-exploitation trade-off in AIS, establishing a formal notion of regret so
that this can be viewed as an online learning problem, and establishing a theoretical regret bound for our procedure. 

\section{Daisee}

\label{sec:AdaIS}

\begin{algorithm}[t]
\begin{algorithmic}[1]
\STATE \textbf{Input:} partitioning $\mathcal{X}_1, \dots, \mathcal{X}_K$, partition proposals
$g_1(x),\dots,g_K(x)$, unnormalized target distribution $f(x)$
\STATE Draw one sample from each subset to initialise estimates, denoting these as $x_{1},\dots,x_{K}$
\FOR {$t=K+1$ \TO $T$}
\STATE {Compute proposal probabilities $\{q_{at}\}_{a=1}^K$ using \eqref{eq:proposal}, with estimates $\{\hat{Z}_{a,t-1}\}_{a=1}^K$  given by \eqref{eq:Zat}}
\STATE {Draw an arm $A_t \sim \text{Discrete}(q_{1t},\ldots,q_{Kt})$} \label{line:A_t}
\STATE {Draw a sample $x_t \sim g_{A_t}(x)$} \label{line:x_t}
\STATE {Compute localized weight $Y_{A_tt} = f(x_t)/g_{A_t}(x_t)$ and update estimate for $Z_a$ using~\eqref{eq:Zat}}
\ENDFOR
\end{algorithmic}
\caption{Daisee (aDaptive Importance Sampling with Exploration-Exploitation)}\label{daisee}
\end{algorithm}

We now propose Daisee, an AIS method which has a similar flavour to UCB. Assume we have a partition of our sample space $\mathcal{X}$ into $K$ disjoint subsets $\{\mathcal{X}_a\}_{a=1}^K$, where each subspace $\mathcal{X}_a$ can be thought as one bandit arm, and we have a fixed tractable proposal distribution $g_a(x)$ restricted on each subset $\mathcal{X}_a$ for $a\in\{1,\ldots,K\}$. For example, if  $\mathcal{X}$ is compact we can choose the uniform distribution $g_a(x) = \mathbb{I}(x \in \mathcal{X}_a)/\int_{\mathcal{X}_a} dx$. 
Daisee uses proposal distributions of  form
\begin{align}
q_t(x) = \sum_{a=1}^K q_{at} g_a(x)\mathbb{I}(x \in \mathcal{X}_a)
\label{eq:proposalform}
\end{align}
where $\sum_a q_{at}=1$ and the probability masses $q_{at}$ of the subsets are to be adapted in the scheme. 
Note that whereas typical bandit algorithms only aim to establish the best arm, we are instead looking to asymptotically
learn a distribution over how often each arm should be pulled.
At each iteration $t$, we query the target by drawing a sample $x_t$ from $q_t$. This can be achieved by first sampling an arm 
$A_t \sim \text{Discrete}(q_{1t},\ldots,q_{Kt})$,
then drawing a sample $x_t\in \mathcal{X}_{A_t}$ from the subset proposal $g_{A_t}(x)$, as per lines~\ref{line:A_t}
and~\ref{line:x_t} respectively in Algorithm~\ref{daisee}.

Let the ratio $Y_{at} := \frac{f(x_t)}{g_{a}(x_t)}$ denote the ``localized'' importance weight for a sample
in subset $a$, noting that each $Y_{at}$

is an unbiased estimator of the (unnormalised) target probability $Z_a$ of  $\mathcal{X}_a$:
\begin{align}
&\pi_a := \int_{\mathcal{X}_a} \pi(x)dx = \frac{Z_a}{Z} = \frac{Z_a}{\sum_{b=1}^K Z_b}, \\
&Z_a := \int_{\mathcal{X}_a} f(x)dx = \EE_{g_a}\left[\frac{f(x)}{g_a(x)}\right].
\label{eq:Z_a}
\end{align}
At each iteration this leads to the following MC estimate for each $Z_a$
\begin{align}
\hat{Z}_{at} := \frac{1}{N_{at}}\sum_{s\le t:A_s=a}Y_{as}
  \label{eq:Zat}
\end{align}
where $N_{at} = \#\{s:s\le t,A_s=a\}$ is the number of times subset $a$ was chosen up to time $t$.
We note an unusual behaviour of the estimates $\hat{Z}_{at}$: although each $Y_{at}$ is unbiased, each $\hat{Z}_{at}$ is biased
because the $N_{at}$'s are random variables correlated with previous samples.
We sidestep this by upper bounding the probability that $\hat{Z}_{at}$ deviates significantly, simultaneously for all $t$. 

Naively, we can use the estimates $\hat{Z}_{at}$ to construct the next proposal, via $q_{a,t+1}\propto \hat{Z}_{at}$.
The problem is that if by chance   $\hat{Z}_{at}$ for some subset $a$ is too small, the resulting underestimated proposal probability will result in low probability for the subset to be picked in future, and hence the bad estimate may not be corrected. This is a symptom of under-exploration. As in UCB, we therefore encourage exploration using an optimism boost $\sigma_{at}$:
\begin{align}
    q_{a,t+1} = \frac{ \hat{Z}_{at}  + \sigma_{at}}{\sum_{b=1}^K( \hat{Z}_{bt}  + \sigma_{bt})}
    \label{eq:proposal}
\end{align}
where $\sigma_{at}$ should be decreasing with $N_{at}$ but grow with $t$. The intuition, which will be formalized in the next section, is that if we have not explored the subset $a$ sufficiently,  $\sigma_{at}$ is relatively large, which compensates and boosts $q_{at}$, allowing us to have higher chance to explore subset $a$ and correct the under-estimate. The growth with $t$ is to ensure sufficient exploration of all subsets over time.

As shown in Algorithm~\ref{daisee}, Daisee iterates between drawing samples from $q_t$ defined by ~\eqref{eq:proposalform}, and using this sample to update our estimates \eqref{eq:Zat}, \eqref{eq:proposal}. 
The next section shows that this  simple
approach leads to a low-regret AIS strategy, in the sense that its per-round pseudo-regret tends to zero.

\section{Regret Formalisation and Analysis}
\label{sec:theorem}

Having introduced Daisee algorithmically, we now switch focus to formalising a notion of regret suitable for the AIS
setting and analysing this regret  for Daisee.

In the nomenclature of bandits, $Z_a$ can be thought of as the expected reward of arm $a$, while $Y_{at}$ is the random reward received at iteration $t$. Note however that our aim is not to maximise expected reward, but rather to maximise the efficiency of the resulting IS sampler. Thus a measure of the (in)efficiency of the proposals $q_t$ is required, so that we can define what makes an optimal proposal among the class in (\ref{eq:proposalform}), and the regret of using a proposal relative to the optimal. We will use the KL divergence $\KL(\pi\|q_t)$ as this has been shown by \citep{chatterjee2015sample} to be the correct measure of the inefficiency of proposal $q_t$. In particular, they showed that the number of samples needed for IS to work effectively scales as $\exp(KL(\pi|q))$. For proposals as given by \eqref{eq:proposalform}, this is,
\begin{align}
\KL(\pi\|q_t) &= \int_{\mathcal{X}} \pi(x) \log \frac{\pi(x)}{\sum_{a} q_{at} g_a(x)} dx\\
=\sum_a&\int_{\mathcal{X}_a} \pi(x)\log\frac{\pi(x)}{g_a(x)}dx - \sum_a \pi_a \log q_{at} \label{eq:kl}
\end{align}
This loss  takes an interesting form: the first term depends only on the local proposals $g_a(x)$ and the second only on
the subset probability masses $q_{at}$.  We thus see the surprising result that we can optimize separately
for $q_{at}$ and each $g_a(x)$. The closer $g_a(x)$ is to the target $\pi$ on subset $\mathcal{X}_a$, the less variability there is in the weights, and the smaller the variance factor, leading to lower KL divergence. We will discuss it in more details later in section \ref{hierarchy}. For now, we will only consider the problem of optimising $q_{at}$, for which we see
from~\eqref{eq:kl} has a true optimum
$q^*_{a} = \pi_a$ and therefore we define our regret $R(q_{t})$ from using proposal $q_t$ as
\begin{align}
R_t := R(q_t) = \KL(\pi\|q_t) - \KL(\pi\|q^*) = 
   \sum_a \pi_a \log\frac{\pi_a}{q_{at}},
   \label{eq:regret}
   \nonumber
\end{align}
which is just the KL divergence between the two probability vectors $(\pi_a)_{a=1}^K$ and $(q_{at})_{a=1}^K$.  Given this instantaneous regret of a
particular proposal, we can now use the cumulative regret
$\sum_{t=1}^T R_t$
as a performance measure of the adaptation scheme itself.  In particular, we would like schemes for which the regrets $R_t$ asymptote to zero, which corresponds to cumulative regrets that grow sublinearly.

Our regret analysis rests on an assumption that each $Y_{at}$ is \textbf{sub-Gaussian}.  Formally, there are variance factors $\tau_a^2$, which we assume known, such that $\EE[\exp(\theta(Y_{at} - \EE[Y_{at}]))] \leq \exp(\frac{1}{2}\tau_a^2\theta^2)$ for each $a$ and $\theta\in \RR$. 
This might seem like 
a strong assumption as importance weights are often wildly varying. However, in practice it can often be satisfied. For example, if the maximum  $M=\sup_{x\in\mathcal{X}} f(x)$ can be found, and we assume $\mathcal{X}$ is compact with each $g_a(x) = \mathbb{I}(x\in \mathcal{X}_a)/\int_{\mathcal{X}_a} dx$ uniform on its subset, then $\tau_a$ can be chosen as $\frac{M}{2}\int_{\mathcal{X}_a} dx$~\citep{WinNT}, which is $\ge \frac{Z_a}{2}$. In generally we expect $\tau_a$ of order $\mathcal{O}(Z_a)$.

\def\OO{\mathcal{O}}
\begin{theorem}
\label{theorem}
Assume sub-Gaussianity and define the optimism boost as $\sigma_{at} := c\tau_a\sqrt{{\log t}/{N_{at}}}$, where $c :=\sqrt{4.14\log_2(2e)}$. The cumulative pseudo-regret at $T$ of Daisee is bounded by $\OO(\sqrt{T}(\log T)^{\frac{3}{4}})$.
\end{theorem}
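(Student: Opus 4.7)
My plan is to attack the bound in three parts: set up a high-probability confidence event on which the estimates are well concentrated, convert that concentration into a per-round regret bound in terms of the optimism boosts, and then sum over $t$ using a lower bound on the pull counts $N_{at}$.

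For the first part, I would define $\mathcal{E}$ to be the event that $|\hat Z_{at}-Z_a|\le \sigma_{at}$ for every $a$ and every $t\le T$. Because each $Y_{at}$ is sub-Gaussian with parameter $\tau_a$, a Hoeffding bound gives $t^{-c^2/2}$-type failure probability for each fixed value of $N_{at}$; but $N_{at}$ is itself a data-dependent random index, so I would use a peeling argument, partitioning the possible values of $N_{at}$ into geometric shells $[2^j,2^{j+1})$, applying Hoeffding on each shell and union bounding over shells (of which there are $O(\log t)$), over times $t\le T$ and over arms $a\le K$. The constant $c=\sqrt{4.14\log_2(2e)}$ appears to be tuned so that the $\log_2(2e)$ slack incurred at each scale is absorbed and $\Prob(\mathcal{E}^c)$ stays polynomially small in $T$.

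The second part is essentially algebraic: on $\mathcal{E}$ we have $Z_a\le \hat Z_{at}+\sigma_{at}\le Z_a+2\sigma_{at}$, so dividing by the normaliser of~\eqref{eq:proposal} and using $\log(1+x)\le x$ together with $\sum_a\pi_a=1$ yields
\[
R_{t+1}\le \log\!\Big(1+\frac{2S_t}{Z}\Big)\le \frac{2S_t}{Z}=\frac{2c}{Z}\sum_{a=1}^K \tau_a\sqrt{\frac{\log t}{N_{at}}},\qquad S_t:=\sum_{b}\sigma_{bt}.
\]
Cumulative regret therefore reduces to controlling $\sum_{t,a}\tau_a\sqrt{\log t/N_{at}}$, which requires a lower bound on each $N_{at}$. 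On $\mathcal{E}$ I also obtain $q_{at}\ge \pi_a Z/(Z+2S_{t-1})$, so once $S_{t-1}\ll Z$ every arm is queried with probability close to $\pi_a$; applying Azuma--Hoeffding to the martingale $N_{at}-\sum_{s\le t}q_{as}$ then gives $N_{at}\ge \pi_a t-O(\sqrt{t\log t})$ with high probability. Plugging this in, summing $\sum_{t\le T}1/\sqrt{\pi_a t}=O(\sqrt{T/\pi_a})$, summing over arms under the paper's assumption $\tau_a=O(Z_a)$ (so that $\sum_a\tau_a/\sqrt{\pi_a}=O(Z\sqrt{K})$), and absorbing the extra $(\log T)^{1/4}$ slack contributed by the pull-count concentration delivers the claimed $O(\sqrt{T}(\log T)^{3/4})$. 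The regret incurred on $\mathcal{E}^c$ only contributes $O(1)$ because $\Prob(\mathcal{E}^c)$ is $\mathrm{poly}(1/T)$ small while each $R_t$ is a.s.\ bounded.

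The main obstacle is the lower bound on $N_{at}$. The pull counts are self-referentially tied to the algorithm's adaptive choices through both $\hat Z_{at}$ and $\sigma_{at}$, so one must argue simultaneously that (i) the optimism boosts force enough exploration of every arm so that $N_{at}$ is not too small, and (ii) the concentration of $N_{at}$ around its conditional mean is tight enough to feed back into the regret without losing too much in $\log T$ factors. Disentangling this dependence cleanly, and getting precisely the $(\log T)^{3/4}$ scaling rather than $(\log T)^{1/2}$ or $\log T$, is where I expect the bulk of the technical work to lie.
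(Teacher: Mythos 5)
Your overall architecture is essentially the paper's: a concentration event for the $\hat Z_{at}$ (your peeling-over-geometric-shells argument is exactly what the finite-time law of the iterated logarithm that the paper invokes packages up, which is indeed where the $\log_2(2e)$ in $c$ comes from), the algebraic step $R_t\le\log(1+2S_{t-1}/Z)\le 2S_{t-1}/Z$ on the good event, and Azuma--Hoeffding applied to the martingale $N_{at}-\sum_{s\le t}q_{as}$ to lower-bound the pull counts. Two steps, however, do not close as written. The circularity you flag at the end is real and your sketch does not break it: the bound $q_{at}\ge \pi_a Z/(Z+2S_{t-1})$ requires controlling $S_{t-1}=\sum_b\sigma_{b,t-1}$, which depends on the very pull counts you are trying to lower-bound, and ``once $S_{t-1}\ll Z$'' assumes the conclusion. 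The paper breaks the loop with the trivial bound $N_{bs}\ge 1$, so that $\sigma_{bs}\le c\tau_b\sqrt{\log t}$ deterministically and $\sum_{s\le t}q_{as}\ge tZ_a/\bigl(2\sum_b(Z_b+2c\tau_b\sqrt{\log t})\bigr)=:2\beta_{at}$ on the concentration event. That crude step is not incidental: it is what makes $\beta_{at}$ of order $t/\sqrt{\log t}$ rather than $t$, and plugging $N_{at}\ge\beta_{at}$ into $\sigma_{at}$ is precisely what produces the $(\log t)^{3/4}$. Your version of the pull-count bound, $N_{at}\ge\pi_a t-\OO(\sqrt{t\log t})$, would yield $(\log T)^{1/2}$ --- stronger than the theorem --- but is not what your argument actually establishes.

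Second, your disposal of the bad event (``each $R_t$ is a.s.\ bounded,'' so $\mathcal{E}^c$ contributes $\OO(1)$) is false in general: $R_t=\sum_a\pi_a\log(\pi_a/q_{at})$, and $q_{at}$ can be as small as order $\sigma_{at}/\sum_b(\hat Z_{bt}+\sigma_{bt})$ with $\hat Z_{bt}$ unbounded under mere sub-Gaussianity, so $R_t$ admits no a.s.\ constant bound. The paper instead proves the loose bounds $\EE[R_t\mid B_t^c]\le\OO(\log t)$ and $\EE[R_t\mid B_t\cap C_t^c]\le\OO(\log t)$ via a truncated-sub-Gaussian computation in the appendix; combined with $\Prob(B_t^c)=\OO(Kt^{-1})$ and $\Prob(C_t^c\mid B_t)=\OO(Kt^{-1}\log t)$ these terms are still lower order, so your conclusion survives, but this step needs that extra argument rather than a boundedness claim.
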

\begin{proof}
We show that the  pseudo-regret
$\EE[R_t] \le \EE[\sum_a\pi_a\log\frac{\pi_a}{{q}_{at}}]=\OO(t^{-\frac{1}{2}}(\log t)^{\frac{3}{4}})$; summing over $t=1,\ldots,T$ gives our result. The idea is to show that $\hat{Z}_{at}\approx Z_a$ and $N_{at}\approx \OO(t)$ with high probability, and that the pseudo-regret has the desired bound in this case. As such, define the events:
\begin{align*}
    B_t:&=\{|\hat{Z}_{as}-Z_{a}| < \sigma_{as} \;  \forall\; 1\leq a\leq K, \textstyle \frac{t}{2} \le s \leq t \}\\
    C_{t} :&= \{N_{at} > \beta_{at} \;\forall\; 1\le a\le K 
    \}, 
    \;\text{where }
    \beta_{at} := \textstyle\frac{tZ_a}{4\sum_b(Z_b + 2c\tau_b\sqrt{\log t})}
\end{align*}
The pseudo-regret can be written as below, and the proof proceeds by bounding each term separately:
\begin{equation}
    \EE[R_t] =\EE[R_t|B_t\cap C_t]\Prob(B_t\cap C_t)+ \EE[R_t|B_t^c]\Prob(B_t^c)+\EE[R_t|B_t\cap C_t^c]\Prob(B_t\cap C_t^c)
    \label{eq:decomposition}
\end{equation}

For the first term in \eqref{eq:decomposition},
this is just a bit of algebra: 
\begin{align}
&\EE[R_t|B_t\cap C_t]\Prob(B_t\cap C_t)
\leq  \EE[\textstyle \sum_a\pi_a\log\frac{\pi_a}{{q}_{at}} | B_t\cap C_t] 
\nonumber
\\
\leq &
\EE[\textstyle \sum_a\pi_a\log(\frac{\pi_a}{Z_a}\sum_b(Z_b + 2\sigma_{bt})) | B_t\cap C_t]
=
\EE[\textstyle \log(1+\frac{2}{Z}\sum_a\sigma_{at}) | B_t \cap C_t]   
\nonumber \\
\leq& \textstyle \frac{2}{Z}\sum_a\EE[\sigma_{at} |  B_t\cap C_{t}] 
\leq
\textstyle
\frac{2}{Z}\sum_a c\tau_a(\log t)^{\frac{1}{2}}\beta_{at}^{-\frac{1}{2}} 
=
\OO(\textstyle 
t^{-\frac{1}{2}}(\log t)^{\frac{3}{4}}
\frac{\sqrt{\sum_a \tau_a}}{\sum_a Z_a}\sum_a\frac{\tau_a}{\sqrt{Z_a}}
) 
\label{eq:eqn1}
\end{align}
For the second and third terms in \eqref{eq:decomposition},
the main steps are to show that both $\Prob(B_t^c)$ and $\Prob(C_t^c|B_t)$ are small. 
For $\Prob(B_t^c)$, for each $a$, we can rewrite $\hat{Z}_{as}$ as $\hat{Z}_{as} = \frac{1}{N_{as}}\sum_{i=1}^{N_{as}}Y_{at^a_i}$, where we re-index the time $\{t^a_i\}_{i=1}^{N_{as}} := \{j: A_j=a, 1\leq j \leq s\}$. Recall that each $Y_{at^a_i}$ is sub-Gaussian with mean $Z_a$ and variance factor $\tau_a^2$. The complication here is that there is a non-trivial dependence of these and the $N_{as}$'s through the proposal probabilities $q_{as}$'s. To side-step this dependence, we can use a finite time law of the iterated logarithm \citep{balsubramani2014sharp,wouterkoolen} (see Appendix \ref{app:lemmas})
to bound the deviations of $\hat{Z}_{as}$, for all possible values of $N_{as}$ between 1 and $s$. Choosing $\delta = 2s^{-2}$,  we have
\begin{align}
\textstyle    \sqrt{\frac{2.07\tau_a^2}{N_{as}} \log\left(\frac{2}{\delta}(1+\log_2N_{as})^2\right)}
\leq \sqrt{\frac{2.07\tau_a^2}{N_{as}} \left(2\log s + 2\frac{\log s}{\log 2}\right)} 
=\sigma_{as}
\end{align}
Hence,
\begin{align}
\textstyle
\mathbb{P}(|\hat{Z}_{as} - Z_a| > \sigma_{as}) 
& \textstyle
\leq \mathbb{P}\left(\exists l, 1 \leq l \leq s: \left|\frac{1}{l}\sum_{i=1}^{l} Y_{at^a_i} - Z_a\right| > \sqrt{\frac{2.07\tau_a^2}{l}\log \left(\frac{2}{\delta}(1+\log_2l)^2\right)}\right) \nonumber\\
&\leq \delta = 2s^{-2}
\end{align}
Using a union bound,
\begin{align}
\textstyle
\Prob(B_t^c) \leq \sum_{a=1}^K \sum_{s=[\frac{t}{2}]}^t \Prob(|\hat{Z}_{as} - Z_a| > \sigma_{as})
\le \sum_{a=1}^K \sum_{s=[\frac{t}{2}]}^t 2s^{-2}
= \OO(Kt^{-1})
\end{align}

To bound $\Prob(C_t^c|B_t)$, note that conditioned on $B_t$ we have, for $\frac{t}{2}\le s\le t$ and each $a$:
\begin{align}
\textstyle 
\sum_{s=1}^tq_{as} 
\geq \sum_{s=\frac{t}{2}}^t q_{as} 
\geq \sum_{s=\frac{t}{2}}^t \frac{Z_a}{\sum_b Z_b + 2\sigma_{bs}}
\geq \frac{Z_at}{2\sum_b(Z_b + 2 c\tau_b\sqrt{\log t})} =2\beta_{at}
\label{eq:beta}
\end{align}
We can write $N_{at} = \sum_{s=1}^t W_{as}$ where $W_{as}|\mathcal{F}_s \sim \text{Bernoulli}({q}_{as})$ and $\mathcal{F}_s$ is the information filtration up to iteration $s$. Since $(W_{as}-{q}_{as})_{s=1}^{\infty}$ is a martingale difference sequence, the Azuma-Hoeffding Inequality \citep{azuma1967weighted} (see Appendix \ref{app:lemmas}) gives:
\begin{align}
\Prob(N_{at} < \beta_{at} | B_t)
&\le \Prob(\textstyle N_{at} < \frac{1}{2}\sum_{s=1}^tq_{as} | B_t)
\leq  2\exp\left(-\frac{1}{2t}\beta_{at}^2\right)
\label{eq:boundN}
\end{align}
A union bound along with $e^{-\alpha}<1/\alpha$ then shows that $\Prob(C_t^c|B_t)\le \OO(K t^{-1}\log t)$.
Putting everything together, along with a loose bound of $\EE[R_t|B_t^c], \EE[R_t|C_t^c\cap B_t]\le \OO(\log t)$ (see Appendix \ref{app:proof}), gives the result that $\EE[R_t]$ is dominated by the first term, which is $ \OO(t^{-\frac{1}{2}} (\log t)^{\frac{3}{4}})$.
\end{proof}

It is worth elaborating on the dependence of \eqref{eq:eqn1} on $\tau_a$'s and $Z_a$'s. Recall that in general we expect $\tau_a= \mathcal{O}(Z_a)$.  Suppose $\tau_a\approx \Omega Z_a$ for some $\Omega>0$. Then 
$\frac{(\sum_a \tau_a)^{1/2}}{\sum_a Z_a}\sum_a\frac{\tau_a}{Z_a^{1/2}}
\approx \Omega^{3/2}\frac{\sum_a Z_a^{1/2}}{(\sum_a Z_a)^{1/2}}\le \Omega^{3/2}K^{1/2}
$.
The larger the variance factors (larger $\Omega$), and the larger the number of arms $K$, the worse the bound. Another harder scenario occurs when masses are not uniformly distributed across the arms, and we have no a priori knowledge of which arm has high mass.  That is, suppose $Z_\text{max}=Z_1\gg Z_a = Z_\text{min}>0$ for $a>1$, and $\tau_a\approx \Omega Z_\text{max}$ for all $a$.  Then 
$\frac{(\sum_a \tau_a)^{1/2}}{\sum_a Z_a}\sum_a\frac{\tau_a}{Z_a^{1/2}}
\approx  \Omega^{3/2}K^{3/2}(\frac{Z_\text{max}}{Z_{\text{min}}})^\frac{1}{2}$, so the difficulty of the problem is governed by the square root of the ratio $Z_\text{max}/Z_\text{min}$, and there is an extra factor of $K$ due to the higher regret of searching $K$ arms only to find just one of them has high mass.

In Appendix \ref{alpha_loss} we generalise the result to regrets defined by $\alpha$-divergence losses, obtaining a cumulative pseudo-regret bound of $ \OO(t^{-\frac{1}{2}} (\log t)^{\frac{1}{2}(\frac{1}{2\alpha}+1)})$. As $\alpha \rightarrow 1$, the $\alpha$-divergence reduces to the KL divergence and the regret bounds coincide. When $\alpha=2$ and $Z$ is known, the loss is $\frac{1}{2}\int_\mathcal{X}(\frac{\pi(x)}{q_t(x)}-1)^2 q_t(x)dx$, which is the variance of the importance weights.

\section{Experiments}
\label{sec:experiment}

\subsection{Daisee on 1D example}
\label{sec:exp_Daisee}
We first demonstrate Daisee on a simple problem, empirically evaluating a number of forms of the optimism boost.  Our sample space is the unit interval $[0,1]$ and we partitioned it evenly into 100 subintervals. Our target density $\pi(x)$ and the subproposals $g_a(x)$ were such that $Y_{at}$ has distribution $2a\text{Bernoulli}(1/100)/101$ for interval $a\in\{1,\ldots,100\}$.  We made this choice so that $Y_{at}$ has a large but controllable variance, so that the resulting problem of adaptation is hard enough for exploration to be important. $\tau_a$ are selected to be proportional to the volume of the subspace (same for all arms), and is tuned to minimise regret, after which we fixed its value and repeat the experiments. We did not find the regret very sensitive to the values of $\tau_a$. Figure \ref{fig:regret_KL} shows that our algorithm 
successfully recovers an effective proposal.
Figure \ref{fig:regret_boost} compares the cumulative regret for various possible optimism boosts, each of which involve an (not shown) constant multiplier that has been
optimised as a hyperparameter.  
 We observe that optimism boosts with an inverse relationship with $N_{at}$ and slow growth with $t$ work well, and seem to achieve sublinear cumulative regret. Among different forms of the boost, it can be observed that our defined optimism boost gives the lowest cumulative regret.

\begin{figure*}[h!]
    \centering
         \begin{subfigure}[b]{0.21\textwidth}
        \includegraphics[width=\textwidth]{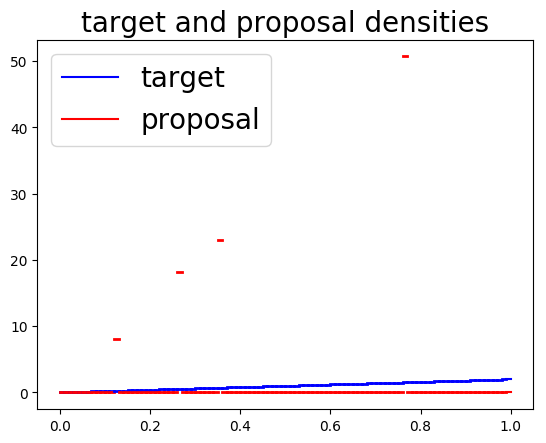}
        \caption{}
    \end{subfigure}
     \begin{subfigure}[b]{0.24\textwidth}
        \includegraphics[width=\textwidth]{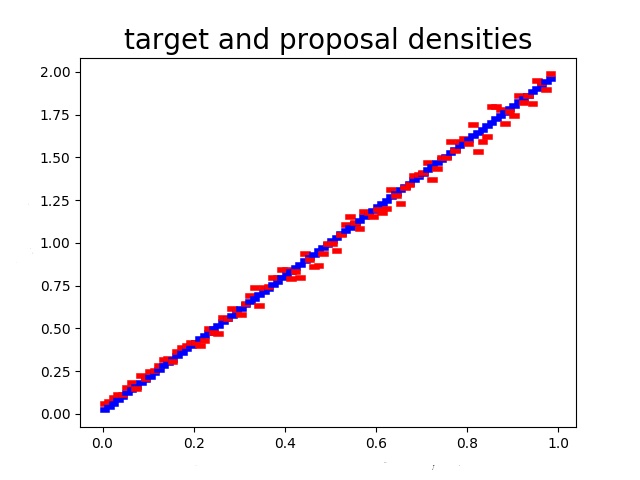}
        \caption{}
    \label{fig:regret_KL}
    \end{subfigure}
    \begin{subfigure}[b]{0.24\textwidth}
        \includegraphics[width=\textwidth]{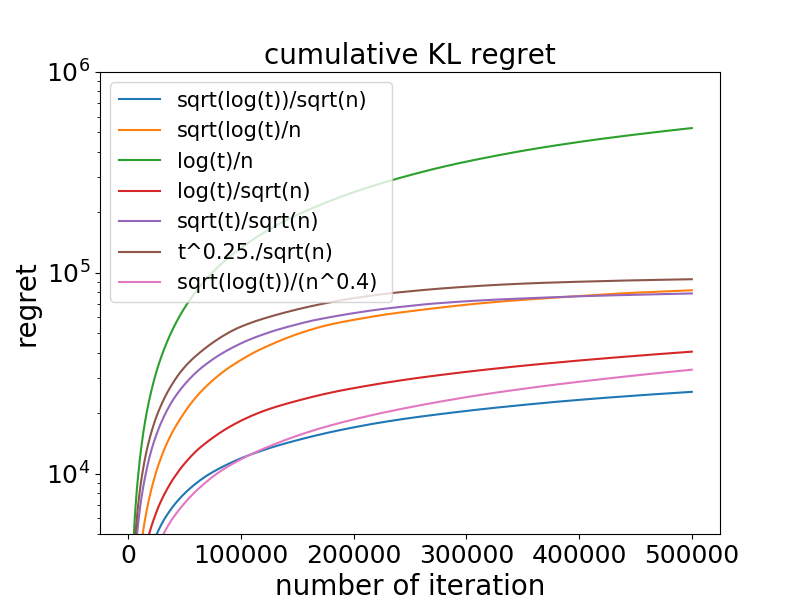}
        \caption{}
        \label{fig:regret_boost}
    \end{subfigure}
        \begin{subfigure}[b]{0.24\textwidth}
        \includegraphics[width=\textwidth]{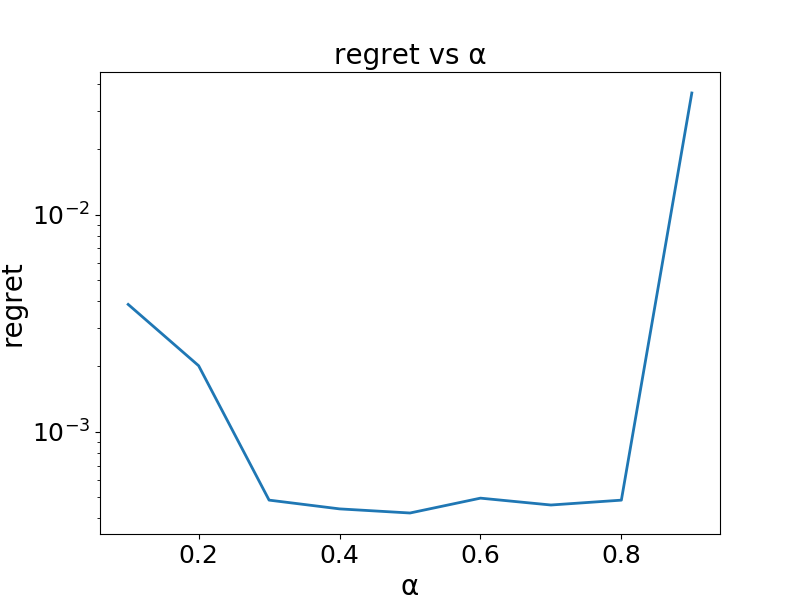}
        \caption{}
        \label{fig:regret_alpha}
    \end{subfigure}
    \caption{Results for 1d example. (a)(b) Target and proposal probabilities at final iteration, with no optimism boost and with optimism boost $\sqrt{\log (t)/N_{at}}$ respectively; (c) Cumulative regrets as functions of iteration, averaged over 10 runs for different forms of optimism of boost ($\sigma_{at}$).
    We observe that our chosen $\sigma_{at}$ outperforms the others; 
    (d) We explore the amount of exploration more systematically by considering boost of the form $\sigma_{at} = \left(\log (t)/N_{at}\right)^\alpha$ and reporting the final
    instantaneous regret as a function of $\alpha$, averaged over 10 runs. It can be observed that values of $\alpha$ near to $0.5$ gave the lowest regret, matching the theory.
    }
\end{figure*}
\begin{figure*}[ht]
    \centering
         \begin{subfigure}[b]{0.19\textwidth}
        \includegraphics[width=\textwidth]{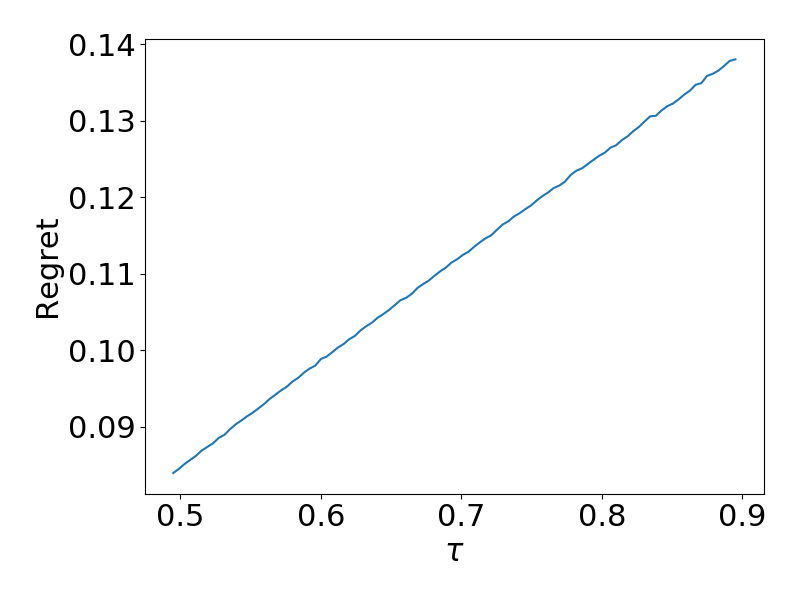}
        \caption{}
        \label{fig:vary_tau}
    \end{subfigure}
     \begin{subfigure}[b]{0.19\textwidth}
        \includegraphics[width=\textwidth]{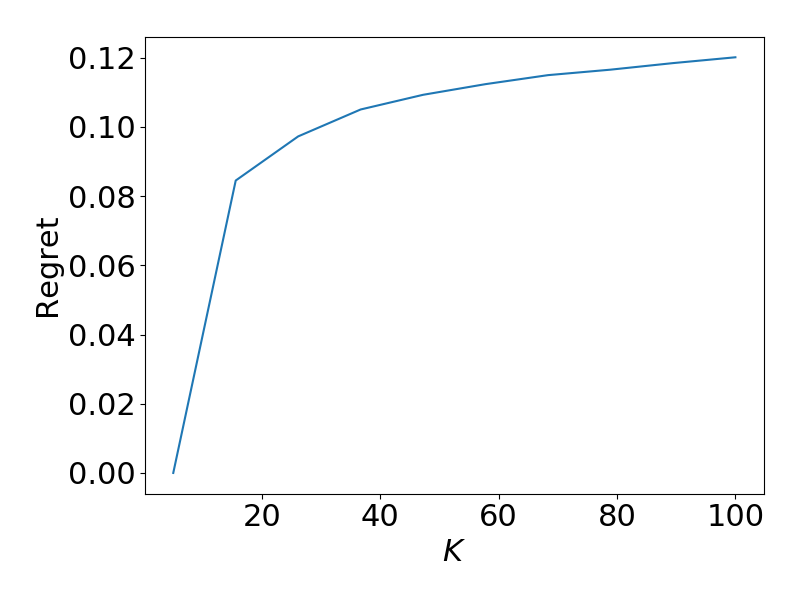}
        \caption{}
    \label{fig:vary_k}
    \end{subfigure}
    \begin{subfigure}[b]{0.19\textwidth}
        \includegraphics[width=\textwidth]{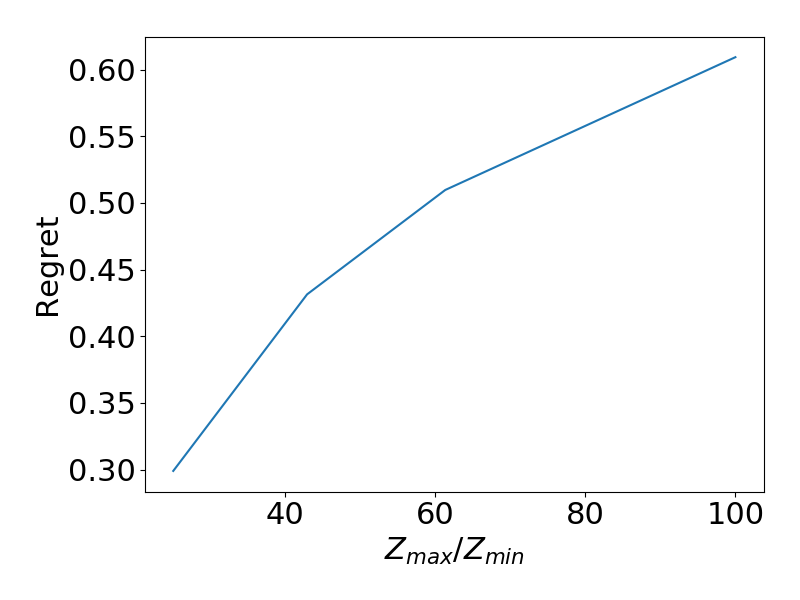}
        \caption{}
        \label{fig:vary_ratio}
    \end{subfigure}
        \begin{subfigure}[b]{0.19\textwidth}
        \includegraphics[width=\textwidth]{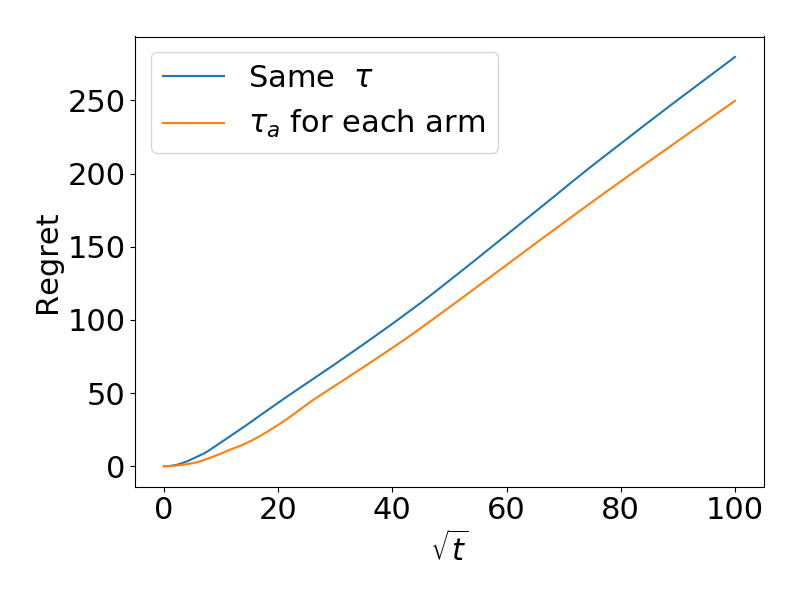}
        \caption{}
        \label{fig:each_tau}
    \end{subfigure}
     \begin{subfigure}[b]{0.19\textwidth}
        \includegraphics[width=\textwidth]{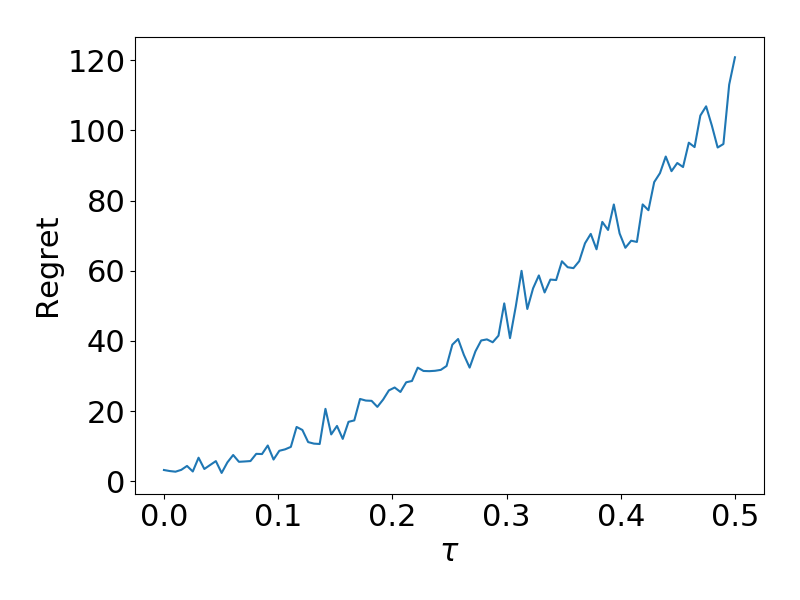}
        \caption{}
        \label{fig:sensitivity}
    \end{subfigure}
    \caption{(a)-(c) Cumulative regret versus $\tau$, $K$ and $\Z_{max}/Z_{min}$ respectively, while keeping the other parameters fixed. (d) Cumulative regret versus $\sqrt{t}$ when using the same $\tau$ or different $\tau_a$ for each arm. (e) Sensitivity analysis of cumulative regret on $\tau$ for a single target density.}
\end{figure*}

\subsection{Validation of Theorem \ref{theorem}}
In this section, we empirically evaluate the dependency of regret on different parameters, including the number of arms $K$, the variance factor $\tau$ and the ratio $Z_{max}/Z_{min}$ where $Z_{max} = \max_{a}\{Z_a\}_{a=1}^K$ and  $Z_{min} = \min_{a}\{Z_a\}_{a=1}^K$. Instead of tuning $\tau$ by grid search as in \ref{sec:exp_Daisee}, we assume it is known and we design the target density accordingly. We adapt the target density such that when varying each parameter, the other two are kept fixed. The sample space we consider is the interval $(0,1)$ and we partition the space into equi-spaced intervals. 

\textbf{Regret bound vs $\tau$:} We fix the number of arms $K=10$. The unnormalised density $f(x) = (10+\delta)\mathbbm{1}_{(0<x\leq 0.05)} +
(10-\delta)\mathbbm{1}_{(0.05<x\leq 0.1)} +
0.1\mathbbm{1}_{(0.1<x <1)}$, for $\delta$ ranging from $0.001$ to $8$ which results in a range of different variance factor $\tau$ while keeping $K$ and the ratio $Z_{max}/Z_{min}$ unchanged. It can be seen from Figure \ref{fig:vary_tau} that bigger $\tau$ leads to larger regret which coincides with our theorem. Intuitively, bigger $\tau$ indicated larger fluctuations which is harder and leads to larger regret.

\textbf{Regret bound vs $K$:} We set $f(x) = 3K\mathbbm{1}_{(0<x\leq 0.2)} + K\mathbbm{1}_{(0.2<x <1})$ and vary $K$ from $5$ to $100$. It can observed from figure \ref{fig:vary_k} that larger $K$ leads to larger regret, as derived from the theorem. This makes intuitive sense as it is easier to estimate relative masses over a smaller number of subspace.

\textbf{Regret bound vs $Z_{max}/Z_{min}$:}
In this case, we move some mass in the first arm equally to the rest of the arms.
$f(x) = 10\mathbbm{1}_{(0<x\leq 1/K-\delta)} +
0.1\mathbbm{1}_{(1/K-\delta <x\leq 1/K)} +
10\mathbbm{1}_{ (float(10x) < \delta/(K-1))} + 0.1 \mathbbm{1}_{(float(10x) \geq \delta/(K-1))}$
where $float$ indicates the floating part of the number. Again, the result from \ref{fig:vary_ratio} shows that the result is as expected according to the theorem. The bigger the ratio, the more variations there is in the target distributions, and more samples needs to be drawn to account for the uncertainties.

\textbf{Using separate $\tau_a$ on each arm:} If one knows each $\tau_a$ on arm $a$, then the regret should have a lower bound according to our derivation in (\ref{eq:eqn1}). We illustrate this on a simple problem where the target unnormalised density is $f(x) = 20\mathbbm{1}_{0<x\leq 0.25} +3\mathbbm{1}_{0.25<x\leq 0.5}
+9\mathbbm{1}_{0.5<x<\leq 0.99}
+\mathbbm{1}_{0.99<x < 1}$ and we set $K=5$. The result can be found in figure \ref{fig:each_tau} where with different $\tau_a$ according to the distribution over each arm leads to better performance. It can also be seen that the cumulative regret grows roughly linearly with $\sqrt{t}$, meaning our analysis for the regret is quite tight.

\textbf{Sensitivity analysis on $\tau$:} In practice, $\tau$ may not be estimated accurately and thus requires tuning. We tuned $\tau$ in the same way as in section \ref{sec:exp_Daisee} where it is chosen to minimise the regret, and show that the algorithm is robust for small values of $\tau$. With the same set up as in figure \ref{fig:each_tau}, we use the same $\tau$ for all arms and is tuned via grid search with $10$ repetitions, the resulting regret displayed in figure \ref{fig:sensitivity} verifies that Daisee is not very sensitive to $\tau$.

\begin{figure*}[ht]
    \centering
     \begin{subfigure}[b]{0.24\textwidth}
        \includegraphics[width=\textwidth]{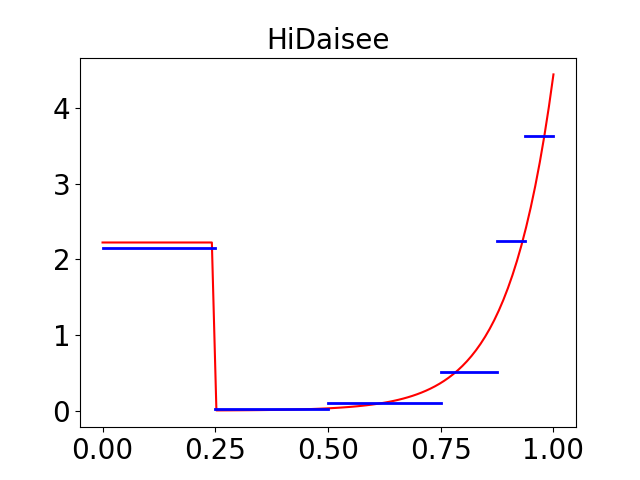}
                \caption{$100^{th}$ iteration}
    \end{subfigure}
    \begin{subfigure}[b]{0.24\textwidth}
        \includegraphics[width=\textwidth]{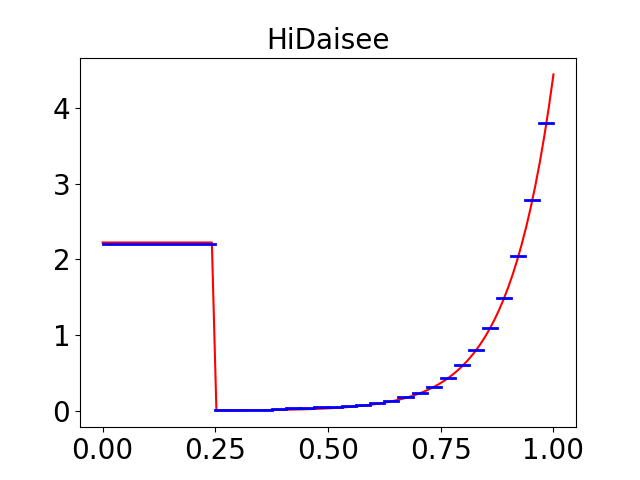}
                \caption{$10000^{th}$ iteration}
    \end{subfigure}
    \begin{subfigure}[b]{0.24\textwidth}
        \includegraphics[width=\textwidth]{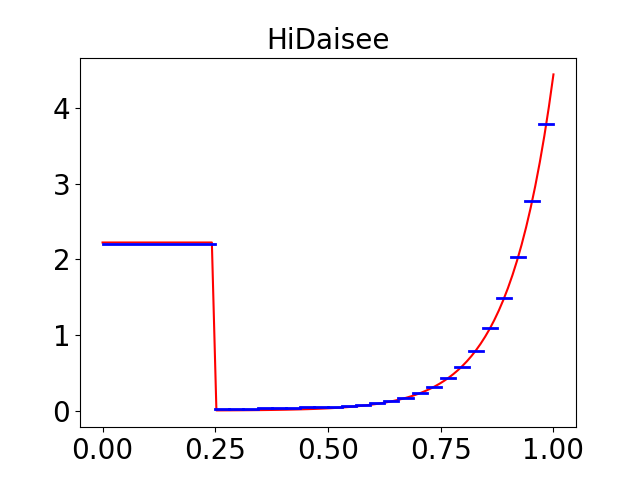}
                \caption{$100000^{th}$ iteration}
    \end{subfigure}
    \begin{subfigure}[b]{0.24\textwidth}
        \includegraphics[width=\textwidth]{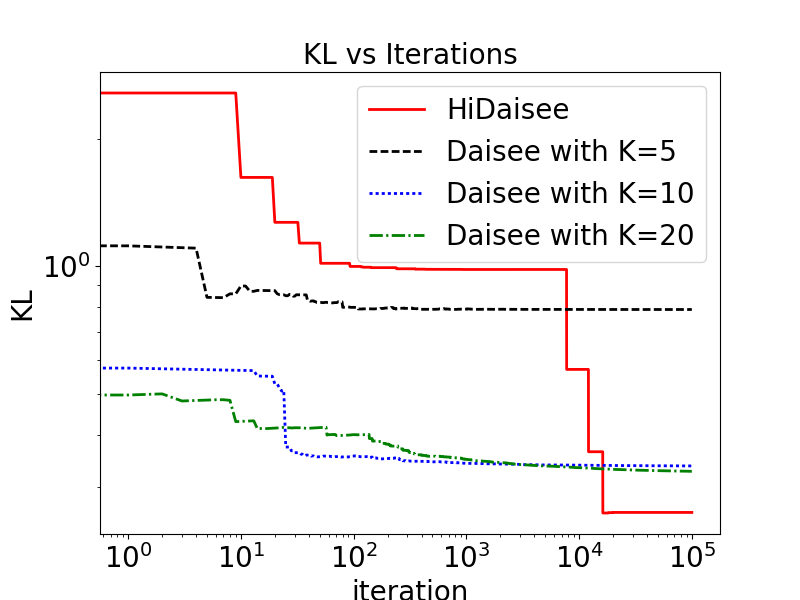}
        \caption{}
            \label{fig:hierarchy_compare}
    \end{subfigure}
    
         \begin{subfigure}[b]{0.24\textwidth}
        \includegraphics[width=\textwidth]{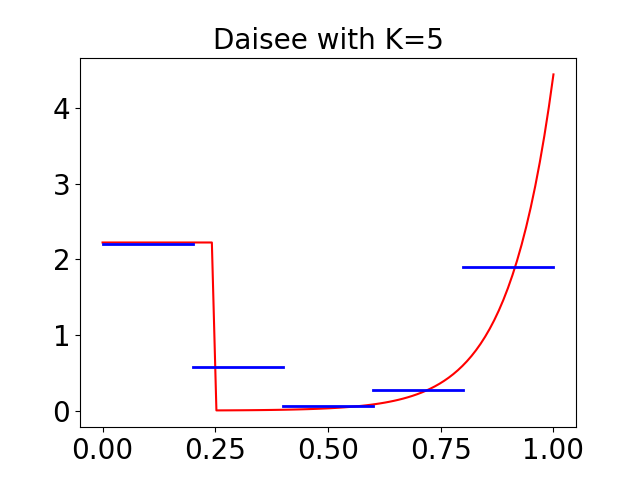}
                \caption{$100000^{th}$ iteration}
    \end{subfigure}
    \begin{subfigure}[b]{0.24\textwidth}
        \includegraphics[width=\textwidth]{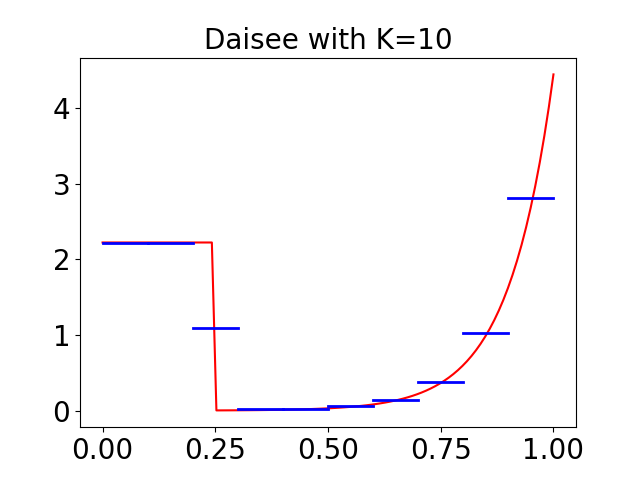}
                \caption{$100000^{th}$ iteration}
    \end{subfigure}
    \begin{subfigure}[b]{0.24\textwidth}
        \includegraphics[width=\textwidth]{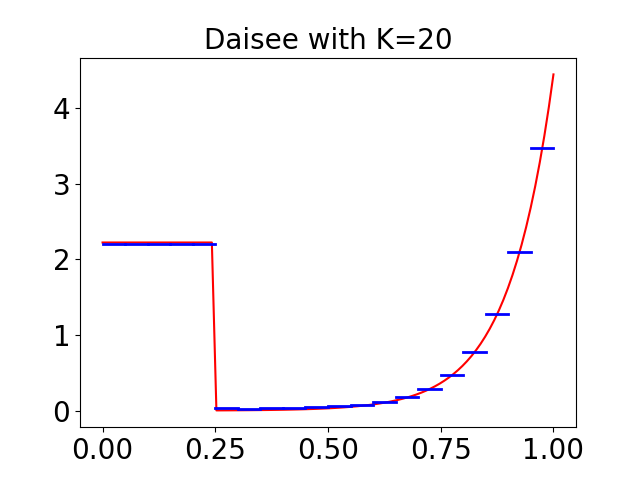}
                \caption{$100000^{th}$ iteration}
    \end{subfigure}
        \begin{subfigure}[b]{0.24\textwidth}
        \includegraphics[width=\textwidth]{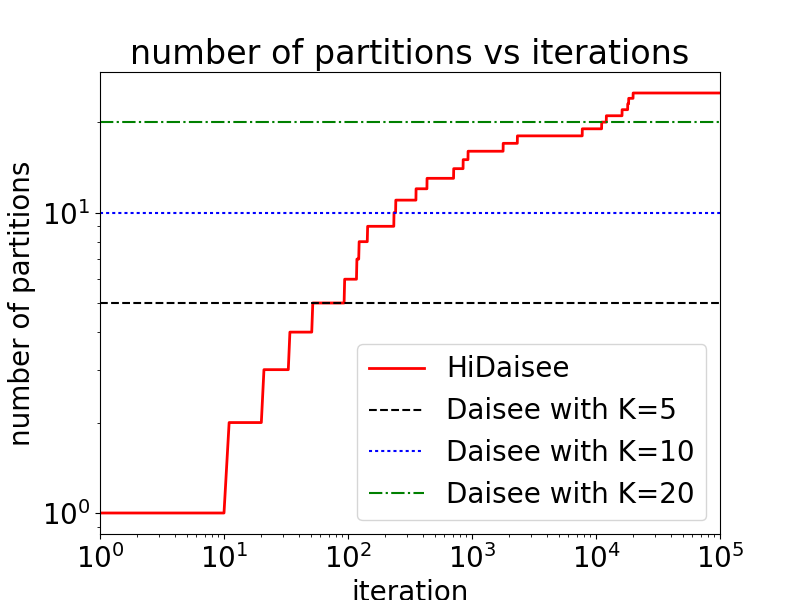}
        \caption{}
        \label{fig:HiDaisee_partition}
    \end{subfigure}
    \caption{(a)-(c): results for HiDaisee at iteration 1000, 10000 and 100000 respectively. Target densities are plotted in red and the adaptive proposal probabilities are plotted in blue. HiDaisee stops splitting the region where the density is relatively high but flat. (e)-(g): results for Daisee with different number of partitions $K = 5, 10, 20$, at iteration 100000, where the space has been partitioned into $K$ intervals of equal length. Last column shows the comparison of Daisee and HiDaisee. (d): $KL(\pi||q_t)$ vs iteration. Daisee with fixed partition asymtotes to the best proposal within that class, which can still be far from the true target density. For HiDaisee the KL continues to decrease. (h): Nmber of partitions vs iterations, which grows sub linearly with our splitting criterion.}
    \label{fig:hierarchy}
\end{figure*}

\section{Hierarchical Daisee}
\label{hierarchy}

We a view to improving the efficiency of Daisee, we now extend it by introducing an approach to refining partition proposals $g_a$ in part of the space where $g_a$ does not approximate $f$ well. Since we have a disjoint set of proposals $g_a$ on each arm $\mathcal{X}_a$, 
one approach of adapting the $g_a$ is to seek a better partitioning over the space. A good partitioning should be finely grained in areas where the density is highly fluctuating, and we consider adapting  the
partition alongside the subset probabilities $q_{at}$ using information from previous the samples. 
So, whereas Daisee used a set of fixed partitions, we will now adapt the partition by recursively splitting subsets into two halves.
We refer to this extended approach as \emph{HiDaisee} (Hierarchical Daisee). We now describe algorithmic details and show some promising experimental results for HiDaisee, but leave the theoretical analysis to future work.

For simplicity, we consider a finite rectangle for $\mathcal{X}$, and a hierarchical binary partition of $\mathcal{X}$ whereby each split simply splits a rectangle into two equal rectangles along one dimension (we simply cycle over the dimensions in the experiments). At each point of the algorithm the partition consists of rectangular subsets, and we use uniform subproposals $g_a(x)$. 
%
In order to learn the tree in an online fashion, HiDaisee decides whether to split a leaf node or not whenever
it is sampled.  We want to avoid oversplitting the tree as larger trees are more expensive to sample from and require more memory.
To control this, we introduce a \emph{splitting criterion}, such that whenever a node is chosen, it is split
if it passes the splitting criterion.  
We want to stop partitioning the subspace if the importance weights obtained from that subspace are similar to each other, 
as this indicates little is to be gained by further splitting.  To this end we introduce a splitting criterion based on the \emph{effective Sample Size} (ESS)~\citep{mcbook} of the node, a measure of efficiency in 
importance sampling that indicates the quality of the proposal. For rectangle $\mathcal{X}_a$ 
at iteration $t$, the $\text{ESS}_{at} \in [1,N_{at}]$ is defined as
\begin{align}
\text{ESS}_{at} := \frac{(\sum_{l\le t:A_l=a}Y_{al})^2}{\sum_{l\le t:A_l=a}Y_{al}^2}
\end{align}
where $Y_{al}$ are the local importance weights as before.
Our split criterion is then to only split when both a) the number of samples at the node exceeds a certain threshold $N_{\text{min}}$ (set to $10$ in
our experiments) and b) the 
ESS of the samples is less that $\alpha N_{at}$, where $\alpha$ is an ESS threshold parameter.

\begin{algorithm}[ht!]
\begin{algorithmic}[1]
\STATE \textbf{Input:} proposal $g(x)$, unnormalized target $f(x)$, ESS threshold $\alpha \in (0,1)$, minimum \#samples for splitting node $N_{\text{min}}$,
tree initialisation $\mathcal{T}$
\FOR {$t=1$ \TO $T$}
\STATE Set node id to root $i \leftarrow 0$ and initialise traversal path to be empty $\mathcal{P} \leftarrow \emptyset$
\WHILE{$i \notin \text{leaf nodes of } \mathcal{T}$}
\STATE $\mathcal{P} = \mathcal{P}\cup i$
\STATE $r \sim \text{Uniform}(0,1)$
\STATE $q_{\text{left}} = \frac{q_{i_{\ell}}}{q_{i_{\ell}}+q_{i_{r}}}$ \hfill \COMMENT{$i_{\ell}/i_{r}$ are left/right children}
\STATE {\bf if} $r < q_{\text{left}}$ {\bf then} $i \leftarrow i_{\ell}$ {\bf else} $i \leftarrow i_r$
{\bf end if}
\ENDWHILE
\STATE $x_t \sim g_i(x)$, compute weight $Y_{it} = f(x_t)/g_i(x_t)$
\STATE Update $q_{j}$ for all $j \in \mathcal{P}$ \hfill \COMMENT{Update $q_i$ for leaf as per~\eqref{eq:proposal} and
    all ancestors as per~\eqref{eq:q-inner}}
\STATE Update $\text{ESS}_{i}$ and $N_i$ for the leaf node
\IF{$N_i\ge N_{\text{min}}$ and $\text{ESS}_i < \alpha N_{i}$}
\STATE Split the node, updating $\mathcal{T}$ to include new nodes and calculating the corresponding $q_j$ by pushing samples down tree
\ENDIF
\ENDFOR
\end{algorithmic}
\caption{HiDaisee (Hierarchical Adaptive Importance Sampling with Exploration-Exploitation)}
\label{alg:HiDaisee}
\end{algorithm}

Though the leaves of the tree form a valid partitioning for Daisee and can thus be sampled directly,
the computational complexity of doing so na\"{i}vely scales as the number of subsets $K$ and so
becomes inefficient as the the tree becomes large.   By storing running estimates for each node $i$ in the tree, 
\begin{align}
\label{eq:q-inner}
    q_{it} = \sum\nolimits_{a \in \text{leaves($i$)}} q_{at}
\end{align} 
where the sum is over the leaves under node $i$, we can instead
traverse down the tree by recursively choosing the left or right child node until a leaf node is reached.
The computational cost now scales with the depth of the tree 
($\approx \mathcal{O}(\log K)$) rather than $K$.
Putting everything together, we arrive at the complete HiDaisee algorithm as shown in Algorithm~\ref{alg:HiDaisee},
where we will sometimes omit an implicit dependency on $t$ to avoid clutter.

\subsection{Experiment results for HiDaisee }
We first demonstrate HiDaisee on a simple example where the target density is $\pi(x) \propto \exp(10(x-1))\mathbf{1}_{x \in (0.25,1)}+ 0.5\mathbf{1}_{x \in (0,0.25]}$. 
Figure \ref{fig:hierarchy} shows the learned proposals at the different numbers of iterations, comparing HiDaisee to Daisee.
It can be seen that initially HiDaisee overestimates the region with low density to encourage exploration. The algorithm stops splitting the region on $0 < x < 0.25$ where the density is relatively high but is flat, whereas it continues to split for the rightmost region where the density is high and is not flat. A video of the learning evolution is available at \url{https://www.youtube.com/watch?v=LG5RCBcs4kg}.

Figure \ref{fig:hierarchy_compare} shows a comparison of  the KL loss per iteration for HiDaisee and Daisee for this example.
Though the initial performance of HiDaisee is worse than Daisee, because it initially has $K=1$, it quickly catches up and has the
best final KL loss.
It takes longer for HiDaisee to converge as it is a harder problem when starting with $K=1$. We also plot the evolution of the number of partitions in Figure \ref{fig:HiDaisee_partition}. Here it can be seen that the algorithm converges to a fixed partition, which is the optimal balancing between accuracy and time complexity. 

We also applied HiDaisee to the classic $2D$ banana shaped problem, with density  $f(x_1,x_2)\propto \exp\{-0.5(0.03x_1^2+(x_2+0.03(x_1^2-100))^2)\}$. The results are displayed in Figure \ref{fig:banana}. We see that the region with high density stops splitting when the density is relatively flat with our ESS splitting criterion. We compare HiDaisee with \emph{Parallel Interacting Markov Adaptive Importance Sampling} (PI-MAIS)~\cite{martino2017layered} for calculating the marginal likelihood (the normalising constant $Z$) in Figure \ref{fig:banana_compare}, which shows that when there are more samples and more complex hierarchies are constructed, HiDaisee achieves lower squared error.

\begin{figure*}[ht!]
\vspace{-5pt}
    \centering
    \begin{subfigure}[b]{0.25\textwidth}
        \includegraphics[width=\textwidth]{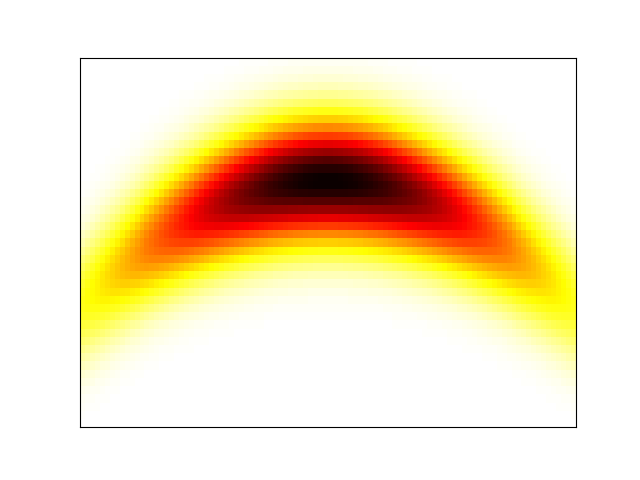}
        \caption{}
    \end{subfigure}
     \begin{subfigure}[b]{0.25\textwidth}
        \includegraphics[width=\textwidth]{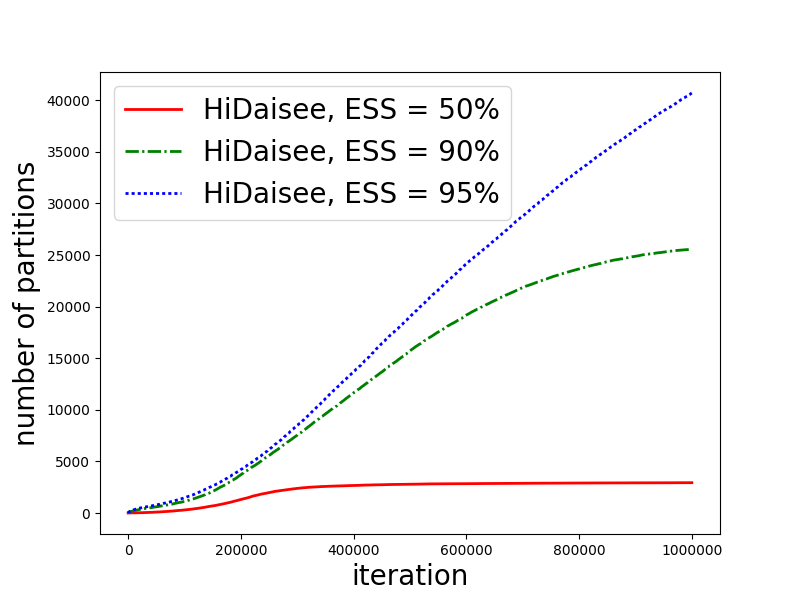}
        \caption{}
    \end{subfigure}
       \begin{subfigure}[b]{0.25\textwidth}
        \includegraphics[width=\textwidth]{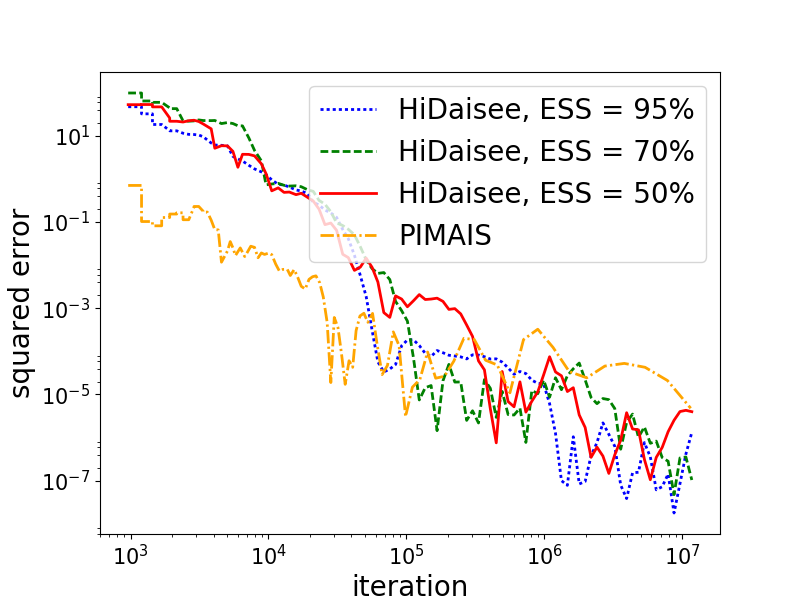}
        \caption{}
        \label{fig:banana_compare}
    \end{subfigure}  
    
     \begin{subfigure}[b]{0.25\textwidth}
        \includegraphics[width=\textwidth]{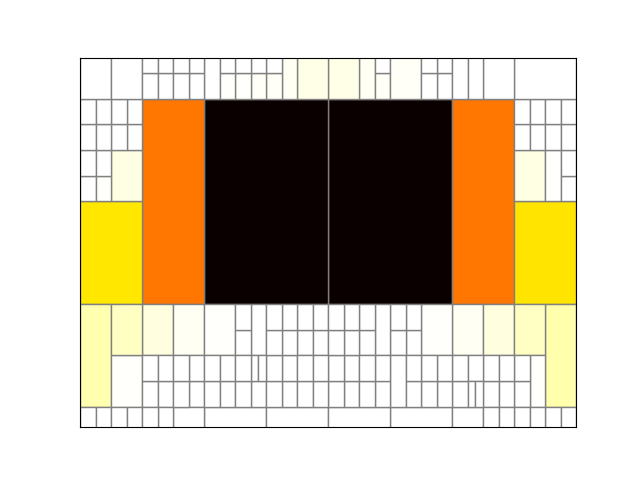}
        \caption{$ESS=50\%$}
    \end{subfigure}
    \begin{subfigure}[b]{0.25\textwidth}
        \includegraphics[width=\textwidth]{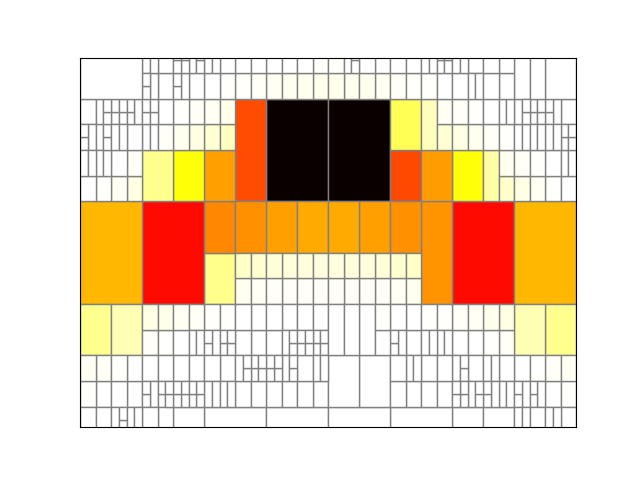}
        \caption{$ESS=70\%$}
    \end{subfigure}
    \begin{subfigure}[b]{0.25\textwidth}
        \includegraphics[width=\textwidth]{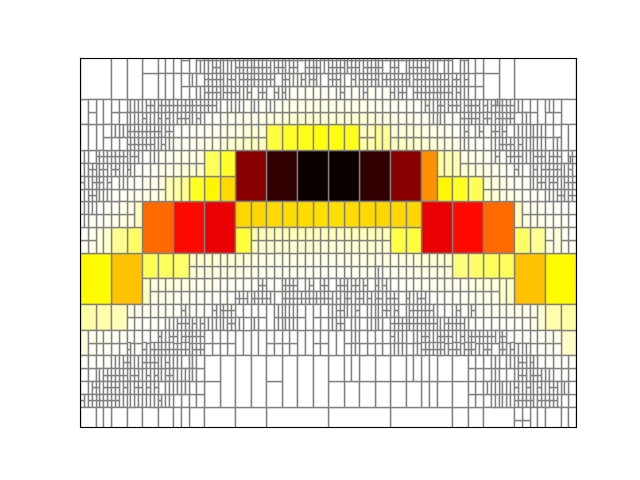}
        \caption{$ESS=95\%$}
    \end{subfigure}
    \caption{Banana shaped example, (a):target density; (b): number of partitions versus number of iterations for different ESS as a percentage of number of samples ;  (c): Comparision of HiDaisee and PI-MAIS in estimating the marginal likelihood $(\hat{Z} - Z)^2$, results are averaged over $10$ runs; (d)-(f) learned proposal distributions with difference ESS, showing the partitions. \vspace{-5pt}  
    }
    \label{fig:banana}
\end{figure*}

\section{Conclusions}
\label{sec:conclusion}
\vspace*{-0.2cm}

In this work, we have addressed the issue of exploration-exploitation in adaptive importance sampling, and proposed a novel approach through the lens of multi-armed bandit problems, borrowing the ideas of upper confidence bounds. We show a cumulative pseudo-regret of $\mathcal{O}(\sqrt{T}(\log T)^{\frac{3}{4}})$. We extend our method to the hierarchical case, where the sample space is recursively split in high density regions, and demonstrate experimentally that our methods give promising performance with little computational costs. Future work includes the relaxation of the sub-Gaussian assumption , an investigation of lower bounds on the KL regret in the AIS setting, and the theoretical analysis of (various variants of) HiDaisee. The analysis already introduced is an important step towards the analysis for HiDaisee, which requires substantial complications and is left as future work. It would also be interesting to investigate the application of the ideas we introduce to other adaptive MC methods.


\bibliographystyle{plainnat} 
\bibliography{ref}

\begin{thebibliography}{31}
\providecommand{\natexlab}[1]{#1}
\providecommand{\url}[1]{\texttt{#1}}
\expandafter\ifx\csname urlstyle\endcsname\relax
  \providecommand{\doi}[1]{doi: #1}\else
  \providecommand{\doi}{doi: \begingroup \urlstyle{rm}\Url}\fi

\bibitem[Agrawal and Goyal(2012)]{agrawal2012analysis}
Shipra Agrawal and Navin Goyal.
\newblock Analysis of thompson sampling for the multi-armed bandit problem.
\newblock In \emph{COLT}, pages 39--1, 2012.

\bibitem[Auer et~al.(2002)Auer, Cesa-Bianchi, and Fischer]{auer2002finite}
Peter Auer, Nicolo Cesa-Bianchi, and Paul Fischer.
\newblock Finite-time analysis of the multiarmed bandit problem.
\newblock \emph{Machine learning}, 47\penalty0 (2-3):\penalty0 235--256, 2002.

\bibitem[Azuma(1967)]{azuma1967weighted}
Kazuoki Azuma.
\newblock Weighted sums of certain dependent random variables.
\newblock \emph{Tohoku Mathematical Journal, Second Series}, 19\penalty0
  (3):\penalty0 357--367, 1967.

\bibitem[Balsubramani(2014)]{balsubramani2014sharp}
Akshay Balsubramani.
\newblock Sharp finite-time iterated-logarithm martingale concentration.
\newblock \emph{arXiv preprint arXiv:1405.2639}, 2014.

\bibitem[Berry and Fristedt(1985)]{berry1985bandit}
Donald~A Berry and Bert Fristedt.
\newblock \emph{Bandit problems: sequential allocation of experiments
  (Monographs on statistics and applied probability)}.
\newblock Springer, 1985.

\bibitem[Bubeck et~al.(2012)Bubeck, Cesa-Bianchi, et~al.]{bubeck2012regret}
S{\'e}bastien Bubeck, Nicolo Cesa-Bianchi, et~al.
\newblock Regret analysis of stochastic and nonstochastic multi-armed bandit
  problems.
\newblock \emph{Foundations and Trends{\textregistered} in Machine Learning},
  5\penalty0 (1):\penalty0 1--122, 2012.

\bibitem[Bubeck et~al.(2013)Bubeck, Cesa-Bianchi, and
  Lugosi]{bubeck2013bandits}
S{\'e}bastien Bubeck, Nicolo Cesa-Bianchi, and G{\'a}bor Lugosi.
\newblock Bandits with heavy tail.
\newblock \emph{IEEE Transactions on Information Theory}, 59\penalty0
  (11):\penalty0 7711--7717, 2013.

\bibitem[Bugallo et~al.(2017)Bugallo, Elvira, Martino, Luengo, Miguez, and
  Djuric]{bugallo2017adaptive}
Monica~F Bugallo, Victor Elvira, Luca Martino, David Luengo, Joaquin Miguez,
  and Petar~M Djuric.
\newblock Adaptive importance sampling: the past, the present, and the future.
\newblock \emph{IEEE Signal Processing Magazine}, 34\penalty0 (4):\penalty0
  60--79, 2017.

\bibitem[Capp{\'e} et~al.(2004)Capp{\'e}, Guillin, Marin, and
  Robert]{cappe2004population}
Olivier Capp{\'e}, Arnaud Guillin, Jean-Michel Marin, and Christian~P Robert.
\newblock Population monte carlo.
\newblock \emph{Journal of Computational and Graphical Statistics}, 13\penalty0
  (4):\penalty0 907--929, 2004.

\bibitem[Capp{\'e} et~al.(2008)Capp{\'e}, Douc, Guillin, Marin, and
  Robert]{cappe2008adaptive}
Olivier Capp{\'e}, Randal Douc, Arnaud Guillin, Jean-Michel Marin, and
  Christian~P Robert.
\newblock Adaptive importance sampling in general mixture classes.
\newblock \emph{Statistics and Computing}, 18\penalty0 (4):\penalty0 447--459,
  2008.

\bibitem[Carpentier and Munos(2011)]{carpentier2011finite}
Alexandra Carpentier and R{\'e}mi Munos.
\newblock Finite time analysis of stratified sampling for monte carlo.
\newblock In \emph{Advances in Neural Information Processing Systems}, pages
  1278--1286, 2011.

\bibitem[Chatterjee and Diaconis(2015)]{chatterjee2015sample}
Sourav Chatterjee and Persi Diaconis.
\newblock The sample size required in importance sampling.
\newblock \emph{arXiv preprint arXiv:1511.01437}, 2015.

\bibitem[Cichocki and Amari(2010)]{cichocki2010families}
Andrzej Cichocki and Shun-ichi Amari.
\newblock Families of alpha-beta-and gamma-divergences: Flexible and robust
  measures of similarities.
\newblock \emph{Entropy}, 12\penalty0 (6):\penalty0 1532--1568, 2010.

\bibitem[Cornuet et~al.(2012)Cornuet, Marin, Mira, and
  Robert]{cornuet2012adaptive}
Jean Cornuet, Jean-Michel Marin, Antonietta Mira, and Christian~P Robert.
\newblock Adaptive multiple importance sampling.
\newblock \emph{Scandinavian Journal of Statistics}, 39\penalty0 (4):\penalty0
  798--812, 2012.

\bibitem[Friedman(1991)]{friedman1991multivariate}
Jerome~H Friedman.
\newblock Multivariate adaptive regression splines.
\newblock \emph{The annals of statistics}, pages 1--67, 1991.

\bibitem[He and Owen(2014)]{he2014optimal}
Hera~Y He and Art~B Owen.
\newblock Optimal mixture weights in multiple importance sampling.
\newblock \emph{arXiv preprint arXiv:1411.3954}, 2014.

\bibitem[Johnson-Lindenstrauss()]{WinNT}
Johnson-Lindenstrauss.
\newblock {Johnson-Lindenstrauss theory}, url =
  {http://lear.inrialpes.fr/people/harchaoui/teaching/2013-2014/ensl/m2/lecture6.pdf}.

\bibitem[Koolen(2017)]{wouterkoolen}
Wouter~M. Koolen.
\newblock A quick and dirty finite time law of the iterated logarithm result.
\newblock \emph{http://blog.wouterkoolen.info/QnD\_LIL/post.html}, 2017.

\bibitem[Kschischang(2017)]{frank}
Frank~R. Kschischang.
\newblock The complementary error function.
\newblock \emph{http://www.comm.utoronto.ca/frank/notes/erfc.pdf}, 2017.

\bibitem[Lai and Robbins(1985)]{lai1985asymptotically}
Tze~Leung Lai and Herbert Robbins.
\newblock Asymptotically efficient adaptive allocation rules.
\newblock \emph{Advances in applied mathematics}, 6\penalty0 (1):\penalty0
  4--22, 1985.

\bibitem[Lepage(1978)]{lepage1978new}
G~Peter Lepage.
\newblock A new algorithm for adaptive multidimensional integration.
\newblock \emph{Journal of Computational Physics}, 27\penalty0 (2):\penalty0
  192--203, 1978.

\bibitem[Lepr{\^e}tre et~al.(2017)Lepr{\^e}tre, Teytaud, and
  Dehos]{lepretre2017multi}
Florian Lepr{\^e}tre, Fabien Teytaud, and Julien Dehos.
\newblock Multi-armed bandit for stratified sampling: Application to numerical
  integration.
\newblock In \emph{TAAI 2017-Conference on Technologies and Applications of
  Artificial Intelligence}, 2017.

\bibitem[Liu(2008)]{liu2008monte}
Jun~S Liu.
\newblock \emph{Monte Carlo strategies in scientific computing}.
\newblock Springer Science \& Business Media, 2008.

\bibitem[Martino et~al.(2017)Martino, Elvira, Luengo, and
  Corander]{martino2017layered}
Luca Martino, Victor Elvira, David Luengo, and Jukka Corander.
\newblock Layered adaptive importance sampling.
\newblock \emph{Statistics and Computing}, 27\penalty0 (3):\penalty0 599--623,
  2017.

\bibitem[Neufeld et~al.(2014)Neufeld, Gy{\"o}rgy, Schuurmans, and
  Szepesv{\'a}ri]{neufeld2014adaptive}
James Neufeld, Andr{\'a}s Gy{\"o}rgy, Dale Schuurmans, and Csaba
  Szepesv{\'a}ri.
\newblock Adaptive monte carlo via bandit allocation.
\newblock \emph{arXiv preprint arXiv:1405.3318}, 2014.

\bibitem[Owen and Zhou(2000)]{owen2000safe}
Art Owen and Yi~Zhou.
\newblock Safe and effective importance sampling.
\newblock \emph{Journal of the American Statistical Association}, 95\penalty0
  (449):\penalty0 135--143, 2000.

\bibitem[Owen(2013)]{mcbook}
Art~B. Owen.
\newblock \emph{Monte Carlo theory, methods and examples}.
\newblock 2013.

\bibitem[Owen et~al.(2017)Owen, Maximov, and Chertkov]{owen2017importance}
Art~B Owen, Yury Maximov, and Michael Chertkov.
\newblock Importance sampling the union of rare events with an application to
  power systems analysis.
\newblock \emph{arXiv preprint arXiv:1710.06965}, 2017.

\bibitem[Rainforth et~al.(2018)Rainforth, Zhou, Lu, Teh, Wood, Yang, and van~de
  Meent]{rainforth2018inference}
Tom Rainforth, Yuan Zhou, Xiaoyu Lu, Yee~Whye Teh, Frank Wood, Hongseok Yang,
  and Jan-Willem van~de Meent.
\newblock Inference trees: Adaptive inference with exploration.
\newblock \emph{arXiv preprint arXiv:1806.09550}, 2018.

\bibitem[Sen et~al.(2018)Sen, Shanmugam, and Shakkottai]{sen2018contextual}
Rajat Sen, Karthikeyan Shanmugam, and Sanjay Shakkottai.
\newblock Contextual bandits with stochastic experts.
\newblock \emph{arXiv preprint arXiv:1802.08737}, 2018.

\bibitem[Srinivas et~al.(2009)Srinivas, Krause, Kakade, and
  Seeger]{srinivas2009gaussian}
Niranjan Srinivas, Andreas Krause, Sham~M Kakade, and Matthias Seeger.
\newblock Gaussian process optimization in the bandit setting: No regret and
  experimental design.
\newblock \emph{arXiv preprint arXiv:0912.3995}, 2009.

\end{thebibliography}

\clearpage
\appendix

\appendix

\section{Concentration Inequalities}\label{app:lemmas}

\begin{theorem}{(Finite Time Law of the Iterated Logarithm \citep{balsubramani2014sharp,wouterkoolen})}
Let $X_1, X_2,\ldots  $ be a sequence of iid sub-Gaussian random variables with mean $\mu$ and variance factor $\tau^2$. For each $\delta \in (0,1)$, we have
\begin{align}
\mathbb{P}\left(\exists s \ge 1, \left|\frac{1}{s}\sum_{i=1}^s X_i - \mu\right| \geq 
\sqrt{\frac{2.07\tau^2}{s}\log \left(\frac{2}{\delta}(1+\log_2s)^2\right)}\right) \leq \delta
 \end{align}   
\label{balsubramani2014sharp}
\end{theorem}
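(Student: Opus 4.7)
The plan is to use the exponential supermartingale technique combined with either a peeling argument or the method of mixtures, which is the standard route to finite-time law-of-the-iterated-logarithm bounds.

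First, I would reduce to the standard case $\mu = 0$, $\tau = 1$ by considering $\tilde{X}_i := (X_i - \mu)/\tau$; these are iid and sub-Gaussian with variance factor one. Writing $S_s := \sum_{i=1}^s \tilde{X}_i$, the sub-Gaussian hypothesis says that for every $\lambda \in \mathbb{R}$,
\begin{equation*}
M_s^\lambda := \exp\!\bigl(\lambda S_s - s\lambda^2/2\bigr)
\end{equation*}
is a non-negative supermartingale with $\mathbb{E}[M_0^\lambda] = 1$, since $\mathbb{E}[\exp(\lambda \tilde{X}_i)] \le \exp(\lambda^2/2)$ for each $i$.

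Next, I would apply Ville's maximal inequality: for any non-negative supermartingale $(\mathcal{M}_s)$ with $\mathbb{E}[\mathcal{M}_0] \le 1$, $\mathbb{P}(\sup_s \mathcal{M}_s \ge 1/\delta) \le \delta$. The issue is that a single $\lambda$ only gives a Hoeffding-type tail good at a single scale; to handle all $s\ge 1$ uniformly we need to combine over a family of $\lambda$'s. Two equivalent routes present themselves. (i) \textbf{Peeling:} partition $\{s \ge 1\}$ into dyadic blocks $I_k := \{2^k \le s < 2^{k+1}\}$, pick on each block $\lambda_k \propto \sqrt{\log(1/\delta_k)/2^k}$, apply Ville's inequality to $M_s^{\lambda_k}$ on $I_k$, and union-bound with $\delta_k := c\,\delta/(k+1)^2$ so that $\sum_k \delta_k = \delta$. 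The $(k+1)^2$ allocation is exactly what produces the $(1+\log_2 s)^2$ factor inside the logarithm, since the block containing $s$ has $k = \lfloor \log_2 s\rfloor$. (ii) \textbf{Method of mixtures:} integrate over $\lambda$ with a prior $\pi(\mathrm{d}\lambda)$ to form the non-negative supermartingale $\bar{M}_s := \int M_s^\lambda \pi(\mathrm{d}\lambda)$. With a Gaussian mixing density one obtains a closed form $\bar{M}_s \propto (1 + s\rho)^{-1/2}\exp\!\bigl(\rho S_s^2/(2(1+s\rho))\bigr)$; Ville's inequality applied to $\bar{M}_s$ and inverted yields a bound of the required form after optimizing the mixture parameter.

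The last step is to invert the event $\{\sup_s M_s^{(\cdot)} \ge 1/\delta\}$ into a statement about $|S_s/s|$, restoring the factors of $\tau^2$ absorbed in the normalization. The main obstacle will be pinning down the concrete constant $2.07$ (rather than the asymptotically optimal $2$ one gets from the classical LIL). This slack comes from finite-time overhead: in the peeling argument it reflects the interplay between the geometric block ratio (here $2$) and the $\log$-of-the-peeling-index penalty, while in the mixture argument it arises from the necessity of paying an extra $\sqrt{1+s\rho}$ factor inside the logarithm. I would expect to optimize these parameters (block ratio or mixture variance) numerically to arrive at $2.07$, following the sharp computation in Balsubramani's Theorem~3 almost verbatim; the qualitative structure of the proof, however, is entirely captured by the supermartingale-plus-uniform-bound template above.
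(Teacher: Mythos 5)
This theorem is quoted by the paper directly from \citep{balsubramani2014sharp,wouterkoolen} without any proof of its own, so the only meaningful comparison is with those cited sources: your exponential-supermartingale-plus-Ville argument, combined with dyadic peeling (equivalently, the method of mixtures), is precisely the route taken there, and your explanation of where the $(1+\log_2 s)^2$ factor (the $\delta_k \propto \delta/(k+1)^2$ allocation over blocks indexed by $k=\lfloor \log_2 s\rfloor$) and the constant $2.07$ come from is accurate. The single step you leave open---the numerical optimization of the block ratio or mixture variance that pins down $2.07$ rather than the asymptotic constant $2$---is exactly the computation you would inherit from Balsubramani, so your sketch is correct and structurally identical to the argument the paper relies on.
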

\begin{theorem}{(Azuma–Hoeffding Inequality \citep{azuma1967weighted})}.
Let $D_1, D_2,\ldots$ be a martingale difference sequence, and suppose that $|D_k| \leq b_k$ almost surely for all $k \geq 1$. Then for all $n \geq 0, \Prob(|\sum_{i=1}^nD_i| \geq \epsilon) \leq 2\exp\left(-\frac{\epsilon^2}{2\sum_{i=1}^n b_i^2}\right)$.
\label{Hoeffding_MDS}
\end{theorem}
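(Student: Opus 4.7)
The plan is to derive the bound via the classical Chernoff-plus-MGF argument. For any $\theta>0$, Markov's inequality applied to $\exp(\theta S_n)$ with $S_n := \sum_{i=1}^n D_i$ gives
\[
\Prob(S_n \ge \epsilon) \;\le\; e^{-\theta\epsilon}\,\EE[e^{\theta S_n}],
\]
so everything reduces to bounding the moment generating function of $S_n$ and then optimizing over $\theta$.

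To bound this MGF, I would peel off the last term using the tower property. Let $\F_k$ denote the filtration with respect to which $(D_k)$ is a martingale difference sequence; then $\EE[e^{\theta S_n}] = \EE\!\left[e^{\theta S_{n-1}}\,\EE[e^{\theta D_n}\mid \F_{n-1}]\right]$. The workhorse lemma here is Hoeffding's inequality for bounded zero-mean random variables: if $\EE[X]=0$ and $|X|\le b$ almost surely, then $\EE[e^{\theta X}] \le \exp(\theta^2 b^2 /2)$. Applying this conditionally on $\F_{n-1}$ (valid because $\EE[D_n \mid \F_{n-1}] = 0$ by the martingale difference property and $|D_n|\le b_n$ almost surely) bounds the inner factor by $\exp(\theta^2 b_n^2/2)$. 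Iterating down through $D_{n-1}, D_{n-2}, \ldots, D_1$ yields
\[
\EE[e^{\theta S_n}] \;\le\; \exp\!\left(\tfrac{\theta^2}{2}\sum_{i=1}^n b_i^2\right).
\]

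Combining the two steps, $\Prob(S_n \ge \epsilon)\le \exp\!\left(-\theta\epsilon + \tfrac{\theta^2}{2}\sum_{i=1}^n b_i^2\right)$, and optimizing over $\theta$ by choosing $\theta = \epsilon/\sum_{i=1}^n b_i^2$ gives the one-sided bound $\exp(-\epsilon^2/(2\sum_{i=1}^n b_i^2))$. The two-sided statement follows by applying the identical argument to the martingale difference sequence $(-D_i)$ (which satisfies the same boundedness hypothesis) and taking a union bound over the two tails, producing the factor of $2$.

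The main technical ingredient, and the only nontrivial piece, is Hoeffding's Lemma itself. I would prove it via a convexity trick: write $X$ inside the exponential as the convex combination $X = \tfrac{b-X}{2b}(-b) + \tfrac{b+X}{2b}(b)$, use convexity of $t\mapsto e^{\theta t}$ to obtain $e^{\theta X}\le \tfrac{b-X}{2b}e^{-\theta b} + \tfrac{b+X}{2b}e^{\theta b}$, take expectations using $\EE[X]=0$, and then dominate the resulting symmetric function $\cosh(\theta b)$ by $\exp(\theta^2 b^2/2)$ through a second-derivative or Taylor-series estimate of $\log\cosh$. Once Hoeffding's Lemma is granted, the rest of the argument is routine martingale unfolding and a one-dimensional optimization.
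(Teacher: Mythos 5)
Your proof is correct: the Chernoff bound, the conditional application of Hoeffding's Lemma (valid since $\EE[D_n \mid \F_{n-1}] = 0$ and $|D_n| \le b_n$ a.s.), the iteration via the tower property, the optimization $\theta = \epsilon/\sum_{i=1}^n b_i^2$, and the union bound over the two tails all go through exactly as you describe, and your convexity proof of Hoeffding's Lemma is the standard one. Note, however, that the paper does not prove this statement at all — it imports it as a black-box concentration inequality with a citation to Azuma (1967) for use in the regret analysis — so your self-contained argument is strictly more than the paper provides; it is the classical textbook proof and there is nothing in the paper to compare it against.
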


\section{Proof Details of Theorem \ref{theorem}}
\label{app:proof}

Note that for any event $E$, by Jensen's inequality,
\begin{align}
    \EE[R_t |E] \leq \sum_a\pi_a \log \left(\pi_a \EE\left[\frac{\sum_b (\hat{Z}_{bt} + \sigma_{bt})}{\hat{Z}_{at} + \sigma_{at}}|E\right]\right)
    \label{eq:eq4}
\end{align}

\textbf{Bounding $\EE[R_t|C_t^c \cap B_t]:$}

\begin{align}
\EE[R_t|C_t^c \cap B_t] &\leq \sum_a \pi_a \log\left(\pi_a\EE\left[\frac{\sum_b (Z_b + 2\sigma_{bt})}{Z_a} | C_t^c \cap B_t \right]\right)\\
&\leq
\frac{2}{Z}\sum_a\EE[\sigma_{at} |  C_t^c \cap B_t]  \\
&= \frac{2}{Z}\sum_a \EE\left[c\tau_a\sqrt{\frac{\log t}{N_{at}}}| C_t^c\right]\\
&\leq \frac{2c\sum_a\tau_a}{Z}\sqrt{\log t} 
\end{align}

\textbf{Bounding $\EE[R_t| B_t^c]:$} recall (\ref{eq:eq4}), we first bound

\begin{align}
    \EE\left[\frac{\sum_b (\hat{Z}_{bt} + \sigma_{bt})}{\hat{Z}_{at} + \sigma_{at}}| B_t^c\right]
    \leq \EE\left[\frac{\sum_b (\hat{Z}_{bt} + \sigma_{bt})}{ \sigma_{at}}| B_t^c\right]
    \leq \frac{\sqrt{t}}{c\tau_a\sqrt{\log t}}\sum_b\EE[ (\hat{Z}_{bt} + \sigma_{bt})| B_t^c]
    \label{eq:eq5}
\end{align}

Now 
\begin{align}
    \EE[ (\hat{Z}_{bt} + \sigma_{bt})| B_t^c] \leq 
    \EE[ (\hat{Z}_{bt} + \sigma_{bt})| B_t^c, |\hat{Z}_{bt} - Z_b| > \sigma_{bt}] + 
    \EE[ (\hat{Z}_{bt} + \sigma_{bt})| B_t^c, |\hat{Z}_{bt} - Z_b| \leq \sigma_{bt}]
    \nonumber
\end{align}
where the second term is bounded by $Z_b + 2c\tau_b\sqrt{\log t}$. For the first term, we use properties of  sub-Gaussian distributions:
\begin{align}
    \EE[ (\hat{Z}_{bt} + \sigma_{bt})&| B_t^c, |\hat{Z}_{bt} - Z_b| > \sigma_{bt}] 
    \leq \sum_{n=1}^t  \EE[ \hat{Z}_{bt} + \sigma_{bt}| |\hat{Z}_{bt} - Z_b| > \sigma_{bt}, N_{bt} = n]\\
    &= \sum_{n=1}^t \EE\left[\frac{1}{n}\sum_{j=1}^n (Y_{bt_j}^b - Z_b) + Z_b + c\tau_b\frac{\sqrt{\log t}}{n} | \frac{1}{n}\sum_{j=1}^n (Y_{bt_j}^b - Z_b) >  c\tau_b\frac{\sqrt{\log t}}{n} \right] \\
    &\leq \sum_{n=1}^t \EE[W_n | W_n > c\tau_b\sqrt{\log t}] + t(Z_b + c\tau_b\sqrt{\log t})
    \label{eq:eq6}
\end{align}

where $W_n: = \frac{1}{n}\sum_{j=1}^n (Y_{bt_j}^b - Z_b) $ is also sub Gaussian distributed with zero mean and variance proxy $\frac{\tau_b^2}{n}$ since it is a linear combination of $n$ sub Gaussian distributed random variables. For large $t$ the mean of the truncated sub Gaussian $\EE[W_n | W_n > c\tau_b\sqrt{\log t}] $ is upper bounded by that of a truncated Gaussian:

\begin{align}
    \EE[W_n | W_n > c\tau_b\sqrt{\log t}]  &\leq \tau \frac{\phi(c\tau_b\sqrt{\log t})}{1-\Phi(c\tau_b\sqrt{\log t})} 
    =\frac{2\phi(c\tau_b\sqrt{\log t})}{\text{erfc}(\frac{c\tau_b\sqrt{\log t}}{\sqrt{2}})} \\
    &\leq \frac{\phi(c\tau_b\sqrt{\log t})}{\exp(-\frac{c^2\tau_b^2\log t}{2})} \sqrt{\pi} \left(\frac{c\tau_b\sqrt{\log t}}{\sqrt{2}} + \sqrt{\frac{c^2\tau_b^2\log t}{2} + 2}\right)
    &\leq \sqrt{2}c\tau_b\sqrt{\log t}
    \label{eq:eq7}
\end{align}
where $\phi(\cdot)$ and $\Phi(\cdot)$ are the $pdf$ and $cdf$ of a standard normal distributions and $\text{erfc}$ denotes the complementary error function, we used \cite{frank} for the inequality involving $\text{erfc}$ above. Hence we have $        \EE[ (\hat{Z}_{bt} + \sigma_{bt})| B_t^c] \leq \mathcal{O}(t\sqrt{\log t}) $ and combinaing (\ref{eq:eq4})(\ref{eq:eq5})(\ref{eq:eq6})(\ref{eq:eq7}) gives

\begin{align}
\EE[R_t| B_t^c] \leq \sum_a\pi_a\log(\frac{\pi_a}{c}\sqrt{\frac{t}{\log t}}\sum_b (\sqrt{2}c\tau_bt\sqrt{\log t} + Z_b) \leq \mathcal{O}(\log t)
\end{align}

\section{Alpha Loss Family}
\label{alpha_loss}

We can generalise the KL loss to the family of alpha-loss which is associated with $\alpha-$ divergence for $\alpha\in\RR$\cite{cichocki2010families}. For any target (unnormalised) density $\p$ and proposal $\ \sim q$, define:
\begin{align*}
    &\mathcal{L}_\alpha (x,q) = \frac{1}{\alpha(1-\alpha)}\left(\alpha\frac{\pi(x)}{q(x)} +     (1-\alpha) - \left(\frac{\pi(x)}{q(x)}\right)^\alpha\right) \\
\end{align*}
where we take the limit for $\alpha = 0, 1$. Taking expectation with respect to $x$ yields the $\alpha-$divergence between $p$ and $q$:
\begin{align*}
    \mathcal{L}_\alpha (q) &= \mathbf{E}[\mathcal{L}_\alpha (x,q)]= D_\alpha(\pi||q)
            =\frac{1}{1-\alpha}Z + \frac{1}{\alpha}-\frac{1}{\alpha(1-\alpha)}
            \int_\mathcal{X}\left(\frac{\pi(x)}{q(x)}\right)^\alpha q(x)dx \\
\end{align*}
In particular, $\mathcal{L}_1 (q) = KL(\pi||q), \mathcal{L}_0 (q) = KL(q||\pi)$ and when $\alpha=2$, this leads to the $\mathcal{L}_2$ loss:
$    \mathcal{L}_2(q) = D_2(p||q)=
    \frac{1}{2}\int_\mathcal{X}\left(\frac{\pi(x)}{q(x)}-1\right)^2q(x)dx.
$
Using Lagrangian gives the optimal proposal
\begin{align}
    q^*_{a} = \frac{\left(\int_{\mathcal{X}_a}\pi(x)^\alpha g_a(x)^{1-\alpha}dx\right)
              ^\frac{1}{\alpha}}
               {\sum_a\left(\int_{\mathcal{X}_a}\pi(x)^\alpha g_a(x)^{1-\alpha}dx\right)
              ^\frac{1}{\alpha}}
              := \frac{(\pi^{(\alpha)}_a)^\frac{1}{\alpha}}{\sum_a(\pi^{(\alpha)}_a)^\frac{1}{\alpha}}
    \label{eq:alpha_q}
\end{align}
where $\pi^{(\alpha)}_a := \int_{\mathcal{X}_a}\pi(x)^\alpha g_a(x)^{1-\alpha}dx$.
The optimal loss is therefore
$\mathcal{L}_\alpha(q^*) =  \frac{1}{1-\alpha} + \frac{1}{\alpha}+\frac{1}{\alpha(\alpha-1)}(\sum_a \pi^{(\alpha)}_a)^\alpha$. Similarly as in section \ref{sec:AdaIS}, we can use a Monte Carlo estimate for the intractable optimal $q^*$:
\begin{align}
    q^*_{a} \propto \pi^{(\alpha)}_a = \frac{1}{Z}\int_{\mathcal{X}_a}\left(\frac{f(x)}{g_a(x)}\right)^\alpha g_a(x)dx
     \propto \EE_{g_a}\left[\left(\frac{f(x)}{g_a(x)}\right)^\alpha\right]
\end{align}

Replacing the definition of $Y_{al}$ with $Y_{al} := \left(\frac{f(x_l)}{g_a(x_l)}\right)^\alpha$, and refine $\hat{Z}_{a,t+1}$ as in (\ref{eq:Zat}), $i.e. \; \hat{Z}_{a,t+1} := \frac{\sum_{l\le t:A_l=a}Y_{al}}{N_{a,t+1}}$, according to (\ref{eq:alpha_q}) we propose

\begin{align}
    q_{a,t+1} = \frac{ (\hat{Z}_{a,t+1}  + \sigma_{a,t+1})^\frac{1}{\alpha}}{\sum_{b=1}^K( \hat{Z}_{b,t+1}  + \sigma_{b,t+1})^\frac{1}{\alpha}},
\end{align}
define $Z_a := Z\pi_a^{(\alpha)}$, this leads to the regret
\begin{align}
     R(q_t) = \frac{1}{\alpha(\alpha-1)}\left(\sum_a \hat{q}_{at}^{1-\alpha}\pi^{(\alpha)}_a
                    - (\sum_a (\pi^{(\alpha)}_a)^\frac{1}{\alpha})^\alpha\right)
        = \frac{1}{\alpha(\alpha-1)Z}\left(\sum_a \hat{q}_{at}^{1-\alpha}Z_a
                    - (\sum_a Z_a^\frac{1}{\alpha})^\alpha\right)
\end{align}

\begin{theorem}
\label{theorem_alpha}
The (instantaneous) regret for general $\alpha$-loss for $\alpha \in (0,2]$ is upper bounded by
$\frac{2^{\frac{1}{2\alpha}+2}c^{\frac{1}{2\alpha}+1}K^\frac{1}{2}}{\alpha^2Z\sqrt{t}}\left(\sum_a\tau_aZ_a^{\frac{1}{2\alpha}-1}\right)\sqrt{\sum_a\tau_a^\frac{1}{\alpha}}(\log t)^{\frac{1}{2}(\frac{1}{2\alpha}+1)}$.
\end{theorem}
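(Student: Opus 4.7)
The plan is to mirror the proof of Theorem~\ref{theorem} step by step, adapting every quantity to the new setting in which $Y_{at} = (f(x_t)/g_a(x_t))^\alpha$ is sub-Gaussian with parameter $\tau_a^2$ and mean $Z_a = Z\pi_a^{(\alpha)}$. First I would introduce the same two high-probability events,
\begin{align*}
B_t &:= \{|\hat{Z}_{as}-Z_a|<\sigma_{as}\;\forall\,a,\; t/2\le s\le t\},\\
C_t &:= \{N_{at}>\beta_{at}\;\forall\,a\},
\end{align*}
but now with $\beta_{at}$ rescaled to reflect the new proposal form: since on $B_t$ we have $q_{at}\ge Z_a^{1/\alpha}/\sum_b(Z_b+2\sigma_{bt})^{1/\alpha}$, the natural choice is $\beta_{at}:=\frac{t\,Z_a^{1/\alpha}}{4\sum_b(Z_b+2c\tau_b\sqrt{\log t})^{1/\alpha}}$. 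I would then decompose $\mathbb{E}[R_t]$ exactly as in \eqref{eq:decomposition}.

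The main (first) term is where the new work is concentrated. Substituting the lower bound $q_{at}\ge Z_a^{1/\alpha}/S'$ with $S'=\sum_b(Z_b+2\sigma_{bt})^{1/\alpha}$ and $S=\sum_b Z_b^{1/\alpha}$ into the regret expression, I would obtain
\begin{align*}
\mathbb{E}[R_t\mid B_t\cap C_t]\Prob(B_t\cap C_t)\le \tfrac{S}{\alpha(\alpha-1)Z}\bigl((S')^{\alpha-1}-S^{\alpha-1}\bigr).
\end{align*}
A Bernoulli-type inequality $(1+x)^{1/\alpha}\le 1+x/\alpha$ (with an appropriate sign/direction depending on whether $\alpha\ge 1$ or $\alpha\in(0,1)$) linearises $S'-S\le \frac{2}{\alpha}\sum_b Z_b^{1/\alpha-1}\sigma_{bt}$, and a second application linearises $(S')^{\alpha-1}-S^{\alpha-1}\le (\alpha-1)S^{\alpha-2}(S'-S)$. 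Putting these together and inserting $\sigma_{bt}\le c\tau_b\sqrt{\log t/\beta_{bt}}$ with the chosen $\beta_{bt}$ yields, after grouping constants, the claimed bound $\frac{2^{1/(2\alpha)+2}c^{1/(2\alpha)+1}K^{1/2}}{\alpha^2 Z\sqrt{t}}(\sum_a\tau_a Z_a^{1/(2\alpha)-1})\sqrt{\sum_a\tau_a^{1/\alpha}}(\log t)^{(1/(2\alpha)+1)/2}$; the $K^{1/2}$ emerges from bounding $\sqrt{\sum_a(Z_a+2c\tau_a\sqrt{\log t})^{1/\alpha}}$ by absorbing subdominant $Z_a^{1/\alpha}$ terms into the $\tau_a$-dominated regime for large $t$.

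For the two remaining terms, the arguments transfer essentially unchanged. The finite-time law of the iterated logarithm applied to $\hat{Z}_{as}=\frac{1}{N_{as}}\sum_{i=1}^{N_{as}}Y_{at_i^a}$ still gives $\Prob(B_t^c)=\mathcal{O}(K/t)$ because it depends only on sub-Gaussianity of the $Y_{at}$'s and is agnostic to the specific form of $q_t$. The Azuma-Hoeffding step bounding $\Prob(C_t^c\mid B_t)$ goes through verbatim, using the martingale $W_{as}-q_{as}$ together with \eqref{eq:beta} (now with $Z_a$ replaced by $Z_a^{1/\alpha}$). Loose $\mathcal{O}(\log t)$ bounds on $\mathbb{E}[R_t\mid B_t^c]$ and $\mathbb{E}[R_t\mid C_t^c\cap B_t]$, analogous to Appendix~\ref{app:proof}, follow by Jensen's inequality combined with truncated sub-Gaussian moment estimates, and are dominated by the main term.

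The hardest step is the algebraic linearisation in the second paragraph. The difficulty is twofold: (i) the direction of the inequality $q_{at}^{1-\alpha}\lessgtr\cdots$ flips between $\alpha>1$ and $\alpha\in(0,1)$, so both the lower bound on $q_{at}$ (used when $1-\alpha<0$) and the upper bound (used when $1-\alpha>0$) must be developed in parallel, and the two branches must be reconciled to produce a single uniform constant valid on $(0,2]$; (ii) the Bernoulli inequalities used for $(1+x)^{1/\alpha}$ and $(1+x)^{\alpha-1}$ are only tight for small $x$, so one must verify that on $B_t\cap C_t$ the relevant ratios $\sigma_{bt}/Z_b$ are indeed small for $t$ sufficiently large, and handle the finite-$t$ correction explicitly. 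A secondary subtlety is that $\mathbb{E}[q_{at}^{1-\alpha}\mid B_t^c]$ can become large when $\alpha>1$ and $q_{at}$ is small, so the bounds on the off-event terms may require a soft truncation $q_{at}\ge 1/K$ or an explicit control via the optimism boost $\sigma_{at}>0$, which guarantees $q_{at}\ge\sigma_{at}^{1/\alpha}/(S+2c\sum_b\tau_b\sqrt{\log t})^{1/\alpha}$ and thus a polynomial-in-$t$ lower bound sufficient to keep the loose $\mathcal{O}(\log t)$ estimate intact.
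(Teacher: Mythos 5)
Your plan reproduces the paper's argument essentially step for step: the same events $B_t$ and $C_t$ with the same rescaled $\beta_{at}$, the same three-way decomposition of $\EE[R_t]$, a first-order linearisation of $\bigl(\sum_b(Z_b+2\sigma_{bt})^{1/\alpha}\bigr)^{\alpha-1}$ for the main term (the paper phrases this as two Taylor expansions of $x^{1/\alpha}$ and $x^{\alpha-1}$ rather than Bernoulli inequalities, but it is the same calculation), and the identical law-of-the-iterated-logarithm and Azuma--Hoeffding steps for the off-events. Your added cautions about the sign flip between $\alpha>1$ and $\alpha\in(0,1)$ and about controlling $q_{at}^{1-\alpha}$ on $B_t^c$ via the optimism boost are points the paper glosses over, but they do not change the route.
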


Recall that with proposal 
\begin{align}
    q_{a,t+1} = \frac{ (\hat{Z}_{a,t+1}  + \sigma_{a,t+1})^\frac{1}{\alpha}}{\sum_{b=1}^K( \hat{Z}_{b,t+1}  + \sigma_{b,t+1})^\frac{1}{\alpha}}
\end{align}
where  $\sigma_{at} = c\tau_a\sqrt{\frac{\log t}{N_{at}}}, \hat{Z}_{a,t+1} := \frac{\sum_{l\le t:A_l=a}Y_{al}}{N_{a,t+1}}, N_{at}$ is the number of times arm $a$ has been picked up until time $t$, and $Y_{al} := \left(\frac{f(x_l)}{g_a(x_l)}\right)^\alpha$, we would like to bound the expectation of regret
$
    R(q_t) = \frac{1}{\alpha(\alpha-1)Z}\left(\sum_a \hat{q}_{at}^{1-\alpha}Z_a
                    - (\sum_a Z_a^\frac{1}{\alpha})^\alpha\right)
$,
where $Z_a = \int_{\mathcal{X}_a}\left(\frac{f(x)}{g_a(x)}\right)^\alpha g_a(x)dx$.

\begin{proof}
We follow the same proof structure as in Theorem \ref{theorem}. Recall
\begin{align}
    B_t:&=\{|\hat{Z}_{as}-Z_{as}| < \sigma_{as} \;\; \forall 1\leq a\leq K, \left[\frac{t}{2}\right] < s \leq t \}\\
     C_{t} :&= \{N_{at} > \beta_{at} \;\forall\; 1\le a\le K \}
\end{align}
and we re-define $\beta_{at}:= \frac{tZ_{a}^\frac{1}{\alpha}}{4(\sum_a (Z_a + 2c\tau_a\sqrt{\log t})^\frac{1}{\alpha}} $. The expected reward at time $t$ therefore can be written as:
\begin{align}
    \EE[R_t] 
    &=\EE[R_t|B_t\cap C_t]\Prob(B_t\cap C_t)
    +\EE[R_t|B_t\cap C_t^c]\Prob(B_t\cap C_t^c)
        + \EE[R_t|B_t^c]\Prob(B_t^c)
    \label{eq:alpha_decom}
\end{align}

For the first term in (\ref{eq:alpha_decom}), we have 
\begin{align}
\EE[R_t|B_t\cap C_t]&\Prob(C_t|B_t)\Prob(B_t)
 \leq  \EE[R_t|N_{at} \geq \beta_{at} \forall a, B_t] \\
&\leq \frac{1}{Z\alpha(\alpha-1)}\EE\left[\frac{\sum_a Z_a^\frac{1}{\alpha}}{\left(\sum_a(Z_a + 2\sigma_{at})^\frac{1}{\alpha}\right)^{1-\alpha}} - \left(\sum_a  Z_a^\frac{1}{\alpha}\right)^\alpha\right]\\
&= \frac{1}{Z\alpha(\alpha-1)}\EE\left[\left(\sum_a Z_a^\frac{1}{\alpha}\left(\sum_a(Z_a + 2\sigma_{at})^\frac{1}{\alpha}\right)^{\alpha-1} - \left(\sum_aZ_a^\frac{1}{\alpha}\right)^\alpha\right)\right]
\label{eq:tmp}\\
\end{align}

We apply Taylor's expansion twice to functions $h(x) = x^\frac{1}{\alpha}$ and $h(x) = x^{\alpha-1}$ to get
\begin{align}
   & \left(\sum_a(Z_a +  2\sigma_{at})^\frac{1}{\alpha}\right)^{\alpha-1} = 
    \left(\sum_a (Z_a^\frac{1}{\alpha}) + \frac{2}{\alpha}\sum_aZ_a^{\frac{1}{\alpha}-1}\sigma_{at} + \mathcal{O}(\sigma_{at}^2)\right)^{\alpha-1}\\
    &= \left(\sum_aZ_a^\frac{1}{\alpha}\right)^{\alpha-1} + \frac{2(\alpha-1)}{\alpha}\left(\sum_aZ_a^\frac{1}{\alpha}\right)^{\alpha-2}\left(\sum_aZ_a^{\frac{1}{\alpha}-1}\sigma_{at}\right) + \mathcal{O}(\sigma_{at}^2)
\end{align}
hence 
\begin{align}
    (\ref{eq:tmp}) &= \frac{1}{Z\alpha(\alpha-1)}\EE\left[\frac{2(\alpha-1)}{\alpha}\left(\sum_a Z_a^\frac{1}{\alpha}\right)^{\alpha-1}\left(\sum_aZ_a^{\frac{1}{\alpha}-1}\sigma_{at}\right) +  \mathcal{O}(\sigma_{at}^2)\right]\\
    &\leq \frac{2c}{Z\alpha}\left(\sum_aZ_a^\frac{1}{\alpha}\right)^{\alpha-1}\left(\sum_a\tau_aZ_a^{\frac{1}{\alpha}-1}\sqrt{\frac{\log t}{\beta_{at}}}\right)+  \mathcal{O}\left(\tau_a^2\frac{\log t}{\beta_{at}}\right) \\
    &=\frac{2c}{Z\alpha^2}\left(\sum_aZ_a^\frac{1}{\alpha}\right)^{\alpha-1}\sum_a\tau_aZ_a^{\frac{1}{\alpha}-1}\sqrt{\frac{4\log t\sum_a(Z_a+2c\tau_a\sqrt{\log t})^\frac{1}{\alpha}}{tZ_{a}^\frac{1}{\alpha}}} + \textit{lower order term}\\
    &\leq \frac{4c}{Z\alpha^2\sqrt{t}}\left(\sum_aZ_a^\frac{1}{\alpha}\right)^{\alpha-1}\sum_a\tau_aZ_a^{\frac{1}{2\alpha}-1}\sqrt{\log t\left(\sum_a Z_a^\frac{1}{\alpha} + \sum_a(2c\tau_a)^\frac{1}{\alpha}(\log t)^\frac{1}{2\alpha}\right)} + \textit{lower order term}\\
    &= \frac{2^{\frac{1}{2\alpha}+2}c^{\frac{1}{2\alpha}+1}K^\frac{1}{2}}{\alpha^2Z\sqrt{t}}\left(\sum_a\tau_aZ_a^{\frac{1}{2\alpha}-1}\right)\sqrt{\sum_a\tau_a^\frac{1}{\alpha}}(\log t)^{\frac{1}{2}(\frac{1}{2\alpha}+1)}+ \textit{lower order term}
\end{align}

We proceed by bounding the probability $\Prob(C_t^c|B_t)$. Conditioning on $B_t$ and for any $\frac{t}{2} < s < t$, we have

\begin{align}
\q_{as} &= \frac{\left(\hat{Z}_{as}+ \sigma_{as}\right)^\frac{1}{\alpha}}{\sum_b\left(\hat{Z}_{bs}+ \sigma_{bs}\right)^\frac{1}{\alpha}}
\geq \frac{Z_a^\frac{1}{\alpha}}{\sum_b (Z_b + 2\sigma_{bs})^\frac{1}{\alpha}}  \;\;\;\; \textit{ by definition of $B_t$} \\
 \implies& \sum_{s=1}^tq_{as} \geq \sum_{s=\frac{t}{2}+1}^t q_{as} \geq 
\frac{tZ_a^\frac{1}{\alpha}}{2\sum_a(Z_a + 2 c\tau_a\sqrt{\log t})^\frac{1}{\alpha}} =2\beta_{at}\;\;\;\;\textit{ since $s \leq t$ and $N_{as} \geq 1$}\\
\implies& \Prob\left(N_{at} < \beta_{at} | B_t\right) \leq 
    \Prob\left(N_{at} < \frac{1}{2}\sum_{s=1}^tq_{as} | B_t\right) \leq
    2\exp\left(-\frac{1}{2t}\beta_{at}^2\right) \;\;\;\textit{by lemma \ref{Hoeffding_MDS}}
\end{align}

Hence for the second term of  (\ref{eq:alpha_decom}), we have 

\begin{align}
&\EE[R_t|B_t\cap C_t^c]\Prob(C_t^c|B_t)\Prob(B_t) \\
&\leq \frac{2}{Z\alpha(\alpha-1)}\EE\left[\left(\frac{2(\alpha-1)}{\alpha}\left(\sum_a Z_a^\frac{1}{\alpha}\right)^{\alpha-1}\left(\sum_aZ_a^{\frac{1}{\alpha}-1}\sigma_{at}\right) +  \mathcal{O}(\sigma_{at}^2)\right)|B_t\cap C_t^c\right]\sum_a\exp\left(-\frac{1}{2t}\beta_{at}^2\right) \\
&\leq \frac{4}{Z\alpha^2}\left(\sum_a Z_a^\frac{1}{\alpha}\right)^{\alpha-1}\left(\sum_aZ_a^{\frac{1}{\alpha}-1}\EE[\sigma_{at}|B_t\cap C_t^c]\right)\sum_a\frac{2t}{\beta_{at}^2} + \textit{lower order term}\\
&\leq  \frac{4c\sqrt{\log t}\sum_a\tau_a}{Z\alpha^2}\left(\sum_a Z_a^\frac{1}{\alpha}\right)^{\alpha-1}\left(\sum_aZ_a^{\frac{1}{\alpha}-1}\right)\sum_a\frac{32\left(\sum_a(Z_a + 2c\tau_a\sqrt{\log t})^\frac{1}{\alpha}\right)^2}{tZ_a^\frac{2}{\alpha}} + \textit{lower order term}\\
&\leq  \frac{2^{7+\frac{2}{\alpha}}c^{1+\frac{2}{\alpha}}(\log t)^{\frac{1}{2} + \frac{1}{\alpha}}\sum_a\tau_a(\sum_a\tau_a^\frac{1}{\alpha})^2}{\alpha^2tZ}\left(\sum_a Z_a^\frac{1}{\alpha}\right)^{\alpha-1}\left(\sum_aZ_a^{\frac{1}{\alpha}-1}\right)\sum_aZ_a^{-\frac{2}{\alpha}}+ \textit{lower order term}
\end{align}

The proof of $\Prob(B_t^c)$ being small is the same as that in theorem \ref{theorem}. Recall
\begin{align}
\Prob(B_t^c) \leq \sum_{a=1}^K \sum_{s=\frac{t}{2}}^t \Prob(|\hat{Z}_{as} - Z_a| > \sigma_{as})
\leq \frac{2K}{t}
\end{align}

Therefore for the thris term in (\ref{eq:alpha_decom}), we have 
\begin{align}
&\EE[R_t|B_t^c]\Prob(B_t^c) 
\leq \frac{2K}{\alpha(\alpha-1)Zt}\EE\left[\left(\sum_a (\hat{Z}_{at} + \sigma_{at})^\frac{1-\alpha}{\alpha}Z_a  \left(\sum_a(\hat{Z}_{at} + \sigma_{at})^\frac{1}{\alpha}\right)^{\alpha-1} - \left(\sum_aZ_a^\frac{1}{\alpha}\right)^\alpha\right)|B_t^c\right] \\
&\leq \frac{2K}{\alpha(\alpha-1)Zt}\left(\sum_a (c\tau_a\sqrt{\log t})^\frac{1-\alpha}{\alpha}  Z_a  \EE\left[\left(\sum_a(\hat{Z}_{at} + \sigma_{at})^\frac{1}{\alpha}\right)^{\alpha-1}|B_t^c\right] - \left(\sum_aZ_a^\frac{1}{\alpha}\right)^\alpha\right)\\
&\leq \frac{2K}{\alpha(\alpha-1)Zt}\left(\sum_a (c\tau_a\sqrt{\log t})^\frac{1-\alpha}{\alpha}  Z_a  \left(\sum_a\EE[(\hat{Z}_{at} + \sigma_{at})^\frac{1}{\alpha}|B_t^c]\right)^{\alpha-1} - \left(\sum_aZ_a^\frac{1}{\alpha}\right)^\alpha\right) \;\; \textit{by Jensen}\\
&\leq \frac{2K}{\alpha(\alpha-1)Zt}\left(\sum_a (c\tau_a\sqrt{\log t})^\frac{1-\alpha}{\alpha}  Z_a  \left(\sum_a(
\mathcal{O}(t\sqrt{\log t}) + Z_a
)^\frac{1}{\alpha}\right)^{\alpha-1} - \left(\sum_aZ_a^\frac{1}{\alpha}\right)^\alpha\right)\\
&\leq \frac{2K}{\alpha(\alpha-1)Zt}\left(\sum_a (c\tau_a\sqrt{\log t})^\frac{1-\alpha}{\alpha}  Z_a  \left(\sum_a(Z_a^\frac{1}{\alpha} + \mathcal{O}((t\sqrt{\log t})^\frac{1}{\alpha})\right)^{\alpha-1} - \left(\sum_aZ_a^\frac{1}{\alpha}\right)^\alpha\right)\\
&\leq \frac{2K}{\alpha(\alpha-1)Zt}\left(\sum_a (c\tau_a\sqrt{\log t})^\frac{1-\alpha}{\alpha}  Z_a  \left(\left(\sum_aZ_a^\frac{1}{\alpha}\right)^{\alpha-1} + 
\mathcal{O}((t\sqrt{\log t})^\frac{\alpha-1}{\alpha})
\right) - \left(\sum_aZ_a^\frac{1}{\alpha}\right)^\alpha\right)\\
& = \frac{2Kc^\frac{1-\alpha}{\alpha}\sum_a\tau_a^\frac{1-\alpha}{\alpha}(\sum_a Z_a)}{\alpha(\alpha-1)Z} t^{-\frac{1}{\alpha}}  + \textit{lower order term}
\end{align}

Summing over $t$ gives the cumulative regret is dominated by the first term in (\ref{eq:alpha_decom}) which is $\mathcal{O}(t^{-\frac{1}{2}}(\log t)^{\frac{1}{2}(\frac{1}{2\alpha}+1)})$.

\end{proof}

\end{document}